\documentclass[final,12pt]{arxiv} %

\usepackage{amssymb}
\usepackage{thmtools,thm-restate} %
\usepackage{nicefrac}       %

\usepackage{svg}
\usepackage{wrapfig}
\usepackage{graphicx}
\usepackage{caption}
\usepackage{float}
\usepackage{tablefootnote}
\usepackage{booktabs}       %
\usepackage{multirow}

\usepackage{algorithmic}
\usepackage{algorithm}

\usepackage{awesomebox}
\usepackage{alertmessage}
\usepackage{enumitem}
\usepackage{comment}

\usepackage{xcolor}

\newcommand{\ie}{i.e.\@\xspace}
\newcommand{\Ie}{I.e.\@\xspace}
\newcommand{\eg}{e.g.\@\xspace}

\newcommand{\cf}{cf.\@\xspace} %

\newcommand{\wrt}{w.r.t.\@\xspace} %
\newcommand{\wolog}{w.l.o.g.\@\xspace} %

\newcommand{\inv}[1]{\ensuremath{#1^{-1}}}

\newcommand{\transpose}[1]{\ensuremath{#1^{\top}}}

\newcommand{\diag}[1]{\ensuremath{\mathrm{diag}\parenthesis{#1}}}

\newcommand{\mat}[1]{\ensuremath{\boldsymbol{\mathrm{#1}}}}

\newcommand{\myvec}[1]{\ensuremath{\mathbf{#1}}}

\newcommand{\totalderivative}[2]{\ensuremath{\dfrac{\mathrm{d} #1}{\mathrm{d} #2}}}

\newcommand{\abs}[1]{\ensuremath{\left|#1\right|}}

\newcommand{\lp}[1][2]{\ensuremath{\ell_{#1}}}

\newcommand{\conditional}[2]{\ensuremath{p\parenthesis{#1|#2}}}
\newcommand{\marginal}[1]{\ensuremath{p\parenthesis{#1}}}

\newcommand{\expnum}[2]{\ensuremath{{#1}\mathrm{e}{#2}}}

\newcommand{\expectation}[1]{\ensuremath{\mathbb{E}_{#1}}}
\newcommand{\rr}[1]{\ensuremath{\mathbb{R}^{#1}}}

\newcommand{\parenthesis}[1]{\ensuremath{\left(#1\right)}}
\newcommand{\brackets}[1]{\ensuremath{\left[#1\right]}}
\newcommand{\braces}[1]{\ensuremath{\left\{#1\right\}}}

\usepackage[acronym, automake, toc, nomain, nopostdot, style=tree, nonumberlist]{glossaries}
\usepackage{glossary-mcols}
\usepackage{xparse}
\usepackage{xspace}
\setglossarystyle{mcolindex}

\newglossary{abbrev}{abs}{abo}{Nomenclature}

\newglossaryentry{aux}{
    name        = \ensuremath{\mathrm{\boldsymbol{u}}} ,
    description = {auxiliary variable} ,
    type        = abbrev,
}

\newglossaryentry{im}{
    name        = \ensuremath{\mathrm{Im}} ,
    description = {image space} ,
    type        = abbrev,
}

\newglossaryentry{ker}{
    name        = \ensuremath{\mathrm{Ker}} ,
    description = {kernel space} ,
    type        = abbrev,
}

\newglossaryentry{kronecker}{
    name        = \ensuremath{\otimes} ,
    description = {Kronecker product} ,
    type        = abbrev,
}

\newglossaryentry{loss}{
    name        = \ensuremath{\mathcal{L}} ,
    description = {loss function} ,
    type        = abbrev,
}

\newglossaryentry{numenv}{
    name        = \ensuremath{\abs{E}} ,
    description = {number of environments} ,
    type        = abbrev,
}

\newglossaryentry{lr}{
    name        = \ensuremath{\eta} ,
    description = {learning rate} ,
    type        = abbrev,
}

\newglossaryentry{hypersphere}{
    name        = \ensuremath{\mathcal{S}} ,
    description = {hypersphere} ,
    type        = abbrev,
}

\newglossaryentry{dec}{
    name        = \ensuremath{\boldsymbol{f}} ,
    description = {decoder map $\gls{Latent}\to\gls{Obs}$} ,
    type        = abbrev,
}

\newglossaryentry{deccomp}{
    name        = \ensuremath{f} ,
    description = {decoder map component} ,
    type        = abbrev,
}

\newglossaryentry{enc}{
    name        = \ensuremath{\boldsymbol{g}} ,
    description = {encoder map $\gls{Obs}\to\gls{Latent}$} ,
    type        = abbrev,
}

\newglossaryentry{numdata}{
    name        = \ensuremath{n} ,
    description = {number of samples} ,
    type        = abbrev,
}

\newglossaryentry{observations}{type=abbrev,name=Observations,description={\nopostdesc}}

\newglossaryentry{obs}{
    name        = \ensuremath{\boldsymbol{x}} ,
    description = {observation vector} ,
    type        = abbrev,
    parent      = observations,
}

\newglossaryentry{obscomp}{
    name        = \ensuremath{x} ,
    description = {observation single component} ,
    type        = abbrev,
    parent      = observations,
}

\newglossaryentry{Obs}{
    name        = \ensuremath{\mathcal{X}} ,
    description = {observation space} ,
    type        = abbrev,
    parent      = observations,
}

\newglossaryentry{obsdim}{
    name        = \ensuremath{D} ,
    description = {dimensionality of the observation space \gls{Obs}} ,
    type        = abbrev,
    parent      = observations,
}

\newglossaryentry{obsmat}{
    name        = \ensuremath{\mat{X}} ,
    description = {observation matrix of \rr{\gls{numdata}\times\gls{obsdim}}} ,
    type        = abbrev,
    parent      = observations,
}

\newglossaryentry{obspos}{
    name        = \ensuremath{\tilde{\boldsymbol{x}}} ,
    description = {positive observation vector} ,
    type        = abbrev,
    parent      = observations,
}

\newglossaryentry{obsneg}{
    name        = \ensuremath{{\boldsymbol{x}}^{-}} ,
    description = {negative observation vector} ,
    type        = abbrev,
    parent      = observations,
}

\newglossaryentry{labels}{type=abbrev,name=Labels,description={\nopostdesc}}

\newglossaryentry{label}{
    name        = \ensuremath{\boldsymbol{y}} ,
    description = {label vector} ,
    type        = abbrev,
    parent      = labels,
}

\newglossaryentry{labelhat}{
    name        = \ensuremath{\widehat{\boldsymbol{y}}} ,
    description = {estimated label vector} ,
    type        = abbrev,
    parent      = labels,
}

\newglossaryentry{labelcomp}{
    name        = \ensuremath{y} ,
    description = {label component} ,
    type        = abbrev,
    parent      = labels,
}

\newglossaryentry{labelcomphat}{
    name        = \ensuremath{\widehat{y}} ,
    description = {label component} ,
    type        = abbrev,
    parent      = labels,
}

\newglossaryentry{labelset}{
    name        = \ensuremath{\mathcal{Y}} ,
    description = {label set} ,
    type        = abbrev,
    parent      = labels,
}

\newglossaryentry{labeldim}{
    name        = \ensuremath{C} ,
    description = {number of classes in the label set \gls{labelset}} ,
    type        = abbrev,
    parent      = labels,
}

\newglossaryentry{latents}{type=abbrev,name=Latents,description={\nopostdesc}}

\newglossaryentry{latent}{
    name        = \ensuremath{\boldsymbol{z}} ,
    description = {latent vector} ,
    type        = abbrev,
    parent     = latents,
}

\newglossaryentry{latentcomp}{
    name        = \ensuremath{z} ,
    description = {latent single component} ,
    type        = abbrev,
    parent     = latents,
}

\newglossaryentry{Latent}{
    name        = \ensuremath{\mathcal{Z}} ,
    description = {latents} ,
    type        = abbrev,
    parent     = latents,
}

\newglossaryentry{latentdim}{
    name        = \ensuremath{d} ,
    description = {dimensionality of the latent space \gls{Latent}} ,
    type        = abbrev,
    parent     = latents,
}

\newglossaryentry{latentmat}{
    name        = \ensuremath{\mat{Z}} ,
    description = {latent matrix of \rr{\gls{numdata}\times\gls{latentdim}}} ,
    type        = abbrev,
    parent      = latents,
}

\newglossaryentry{latentpos}{
    name        = \ensuremath{\tilde{\boldsymbol{z}}} ,
    description = {positive latent vector} ,
    type        = abbrev,
    parent      = latents,
}

\newglossaryentry{latentneg}{
    name        = \ensuremath{\boldsymbol{z}^{-}} ,
    description = {negative latent vector} ,
    type        = abbrev,
    parent      = observations,
}

\newglossaryentry{sigmaz}{
    name        = \ensuremath{\boldsymbol{\sigma}_{\gls{latentcomp}}} ,
    description = {std of \gls{latentcomp}} ,
    type        = abbrev,
    parent     = latents,
}

\newglossaryentry{content}{
    name        = \ensuremath{\boldsymbol{z}^{c}} ,
    description = {content latent vector} ,
    type        = abbrev,
    parent     = latents,
}

\newglossaryentry{contentcomp}{
    name        = \ensuremath{z^{c}} ,
    description = {content latent single component} ,
    type        = abbrev,
    parent     = latents,
}

\newglossaryentry{Content}{
    name        = \ensuremath{\mathcal{Z}^{c}} ,
    description = {content} ,
    type        = abbrev,
    parent     = latents,
}

\newglossaryentry{contentdim}{
    name        = \ensuremath{d_{c}} ,
    description = {dimensionality of \gls{content}} ,
    type        = abbrev,
    parent     = latents,
}

\newglossaryentry{sigmac}{
    name        = \ensuremath{\boldsymbol{\sigma}_{c}} ,
    description = {std of \gls{contentcomp}} ,
    type        = abbrev,
    parent     = latents,
}

\newglossaryentry{style}{
    name        = \ensuremath{\boldsymbol{z}^{s}} ,
    description = {style latent vector} ,
    type        = abbrev,
    parent     = latents,
}

\newglossaryentry{stylecomp}{
    name        = \ensuremath{z^{s}} ,
    description = {style latent single component} ,
    type        = abbrev,
    parent     = latents,
}

\newglossaryentry{Style}{
    name        = \ensuremath{\mathcal{Z}^{s}} ,
    description = {style} ,
    type        = abbrev,
    parent     = latents,
}

\newglossaryentry{styledim}{
    name        = \ensuremath{d_{s}} ,
    description = {dimensionality of \gls{style}} ,
    type        = abbrev,
    parent     = latents,
}

\newglossaryentry{sigmas}{
    name        = \ensuremath{\boldsymbol{\sigma}_{s}} ,
    description = {std of \gls{stylecomp}} ,
    type        = abbrev,
    parent     = latents,
}

\newglossaryentry{modality}{
    name        = \ensuremath{\boldsymbol{z}^{m}} ,
    description = {modality-specific latent vector} ,
    type        = abbrev,
    parent     = latents,
}

\newglossaryentry{modalitycomp}{
    name        = \ensuremath{z^{m}} ,
    description = {modality-specific  latent single component} ,
    type        = abbrev,
    parent     = latents,
}

\newglossaryentry{Modality}{
    name        = \ensuremath{\mathcal{Z}^{m}} ,
    description = {latent subspace of \gls{modality}} ,
    type        = abbrev,
    parent     = latents,
}

\newglossaryentry{modalitydim}{
    name        = \ensuremath{d_{m}} ,
    description = {dimensionality of \gls{modality}} ,
    type        = abbrev,
    parent     = latents,
}

\newglossaryentry{algebra}{type=abbrev,name=Algebra,description={\nopostdesc}}

\newglossaryentry{identity}{
    name        = \ensuremath{\boldsymbol{\mathrm{I}}} ,
    description = { identity matrix} ,
    type        = abbrev,
    parent      = algebra,
}
\newcommand{\Id}[1]{\ensuremath{\gls{identity}_{#1}}}

\newglossaryentry{ones}{
    name        = \ensuremath{\boldsymbol{\mathrm{1}}} ,
    description = {a vector of ones} ,
    type        = abbrev,
    parent      = algebra,
}

\newglossaryentry{zeros}{
    name        = \ensuremath{\boldsymbol{\mathrm{0}}} ,
    description = {a vector of zeros} ,
    type        = abbrev,
    parent      = algebra,
}
\newcommand{\zeros}[1]{\ensuremath{\gls{zeros}_{#1}}}

\newglossaryentry{jacobian}{
    name        = \ensuremath{\boldsymbol{\mathrm{J}}} ,
    description = {Jacobian matrix} ,
    type        = abbrev,
    parent      = algebra,
}

\newglossaryentry{hessian}{
    name        = \ensuremath{\boldsymbol{\mathrm{H}}} ,
    description = {Hessian matrix} ,
    type        = abbrev,
    parent      = algebra,
}

\newglossaryentry{d}{
    name        = \ensuremath{\boldsymbol{\mathrm{D}}} ,
    description = {diagonal matrix} ,
    type        = abbrev,
    parent      = algebra,
}

\newglossaryentry{o}{
    name        = \ensuremath{\boldsymbol{\mathrm{O}}},
    description = {orthogonal matrix} ,
    type        = abbrev,
    parent      = algebra,
}

\newglossaryentry{scalar}{
    name        = \ensuremath{\alpha} ,
    description = {scalar field} ,
    type        = abbrev,
    parent      = algebra,
}

\newglossaryentry{perm}{
    name        = \ensuremath{\mathbb{P}} ,
    description = {group of permutation matrices} ,
    type        = abbrev,
    parent      = algebra,
}

\newglossaryentry{p}{
    name        = \ensuremath{\mat{P}},
    description = {permutation matrix} ,
    type        = abbrev,
    parent      = algebra,
}

\newglossaryentry{prob}{type=abbrev,name=Probability theory,description={\nopostdesc}}

\newglossaryentry{cov}{
    name        = \ensuremath{\boldsymbol{\mathrm{\Sigma}}},
    description = {covariance matrix} ,
    type        = abbrev,
    parent      = prob,
}

\newglossaryentry{mean}{
    name        = \ensuremath{\boldsymbol{\mu}},
    description = {mean} ,
    type        = abbrev,
    parent      = prob,
}

\newglossaryentry{std}{
    name        = \ensuremath{\boldsymbol{\sigma}},
    description = {standard deviation} ,
    type        = abbrev,
    parent      = prob,
}

\newglossaryentry{entropy}{
    name        = \ensuremath{\mathrm{H}} ,
    description = {entropy} ,
    type        = abbrev,
    parent      = prob,
}

\newglossaryentry{expfamparam}{
    name        = \ensuremath{\boldsymbol{\theta}} ,
    description = {parameter of exponential family} ,
    type        = abbrev,
    parent      = prob,
}

\newglossaryentry{expfamnatparam}{
    name        = \ensuremath{\boldsymbol{\eta}} ,
    description = {natural parameter of exponential family} ,
    type        = abbrev,
    parent      = prob,
}

\newglossaryentry{expfamsuffstat}{
    name        = \ensuremath{T(\gls{obs})} ,
    description = {sufficient statistics of exponential family} ,
    type        = abbrev,
    parent      = prob,
}

\newglossaryentry{expfamlogpartition}{
    name        = \ensuremath{A} ,
    description = {log parition function of exponential family (depends on \gls{expfamnatparam})} ,
    type        = abbrev,
    parent      = prob,
}

\newglossaryentry{wishart}{
    name        = \ensuremath{\mathcal{W}} ,
    description = {Wishart distribution} ,
    type        = abbrev,
    parent      = prob,
}

\newglossaryentry{normal}{
    name        = \ensuremath{\mathcal{N}} ,
    description = {normal distribution} ,
    type        = abbrev,
    parent      = prob,
}
\newcommand{\normal}[2]{\ensuremath{\gls{normal}\parenthesis{#1;#2}}}

\newglossaryentry{matrixnormal}{
    name        = \ensuremath{\mathcal{MN}} ,
    description = {normal distribution} ,
    type        = abbrev,
    parent      = prob,
}

\newglossaryentry{causal}{type=abbrev,name=Causality,description={\nopostdesc}}

\newglossaryentry{cause}{
    name        = \ensuremath{\boldsymbol{N}},
    description = {noise (independent)  variable vector} ,
    type        = abbrev,
    parent      = causal,
}

\newglossaryentry{causecomp}{
    name        = \ensuremath{N},
    description = {noise (independent)  variable component} ,
    type        = abbrev,
    parent      = causal,
}

\newglossaryentry{Cause}{
    name        = \ensuremath{\mathcal{N}} ,
    description = {space of the noise variables} ,
    type        = abbrev,
    parent      = causal,
}

\newglossaryentry{effect}{
    name        = \ensuremath{\boldsymbol{X}},
    description = {observation vector} ,
    type        = abbrev,
    parent      = causal,
}
\newglossaryentry{effectcomp}{
    name        = \ensuremath{X},
    description = {observation component} ,
    type        = abbrev,
    parent      = causal,
}

\newglossaryentry{Effect}{
    name        = \ensuremath{\mathcal{X}} ,
    description = {space of the effect variables} ,
    type        = abbrev,
    parent      = causal,
}

\newglossaryentry{pa}{
    name        = \ensuremath{\boldsymbol{Pa}},
    description = {parents of \gls{effect}} ,
    type        = abbrev,
    parent      = causal,
}
\newcommand{\pai}[1][i]{\ensuremath{\gls{pa}_{#1}}}

\newglossaryentry{nondesc}{
    name        = \ensuremath{\boldsymbol{ND}},
    description = {non-descendants of \gls{effect}} ,
    type        = abbrev,
    parent      = causal,
}

\newglossaryentry{nondescminuspa}{
    name        = \ensuremath{\boldsymbol{\overline{ND}}},
    description = {non-descendants of \gls{effect}, excluding its parents} ,
    type        = abbrev,
    parent      = causal,
}

\newglossaryentry{semf}{
    name        = \ensuremath{\boldsymbol{f}},
    description = {structural assignment in \glspl{sem}} ,
    type        = abbrev,
    parent      = causal,
}

\newglossaryentry{semfcomp}{
    name        = \ensuremath{f},
    description = {a component of \gls{semf}} ,
    type        = abbrev,
    parent      = causal,
}

\newglossaryentry{order}{
    name        = \ensuremath{\pi},
    description = {causal ordering} ,
    type        = abbrev,
    parent      = causal,
}

\newglossaryentry{indexset}{
    name        = \ensuremath{\mathcal{I}},
    description = {index set} ,
    type        = abbrev,
    parent      = causal,
}
\newglossaryentry{adjacency}{
    name        = \ensuremath{\boldsymbol{\mathcal{A}}} ,
    description = {adjacency matrix of a \glspl{sem}} ,
    type        = abbrev,
    parent      = causal,
}

\newglossaryentry{connectivity}{
    name        = \ensuremath{\boldsymbol{\mathcal{C}}} ,
    description = {connectivity matrix of a \glspl{sem}} ,
    type        = abbrev,
    parent      = causal,
}

\newglossaryentry{dependency}{
    name        = \ensuremath{\mathcal{D}} ,
    description = {dependency matrix of a \glspl{sem}} ,
    type        = abbrev,
    parent      = causal,
}

\newglossaryentry{seq}{
    name        = \ensuremath{\sim_{\acrshort{dag}}} ,
    description = {structural equivalence} ,
    type        = abbrev,
    parent      = causal,
}

\newglossaryentry{contrastive}{type=abbrev,name=Contrastive Learning,description={\nopostdesc}}
\newglossaryentry{clloss}{
    name        = \ensuremath{\mathcal{L}_{\mathrm{\acrshort{cl}}}} ,
    description = {contrastive loss function} ,
    type        = abbrev,
    parent      = contrastive,
}

\newglossaryentry{alignloss}{
    name        = \ensuremath{\mathcal{L}_{\mathrm{align}}} ,
    description = {alignment term in \gls{clloss}} ,
    type        = abbrev,
    parent      = contrastive,
}

\newglossaryentry{uniformloss}{
    name        = \ensuremath{\mathcal{L}_{\mathrm{uniform}}} ,
    description = {uniformity term in \gls{clloss}} ,
    type        = abbrev,
    parent      = contrastive,
}

\newglossaryentry{temp}{
    name        = \ensuremath{{\boldsymbol{\tau}}} ,
    description = {temperature in \gls{clloss}} ,
    type        = abbrev,
    parent      = contrastive,
}

\newglossaryentry{numneg}{
    name        = \ensuremath{M} ,
    description = {number of negative samples} ,
    type        = abbrev,
    parent      = contrastive,
}

\newglossaryentry{vaes}{type=abbrev,name=\acrlongpl{vae},description={\nopostdesc}}

\newglossaryentry{q}{
    name        = \ensuremath{q_{\gls{encpar}}(\gls{latent}|\gls{obs})} ,
    description = {variational posterior of the \acrshort{vae}, mapping $\gls{obs}\mapsto\gls{latent}$ parametrized by \gls{encpar}} ,
    type        = abbrev,
    parent      = vaes,
}
\newglossaryentry{qopt}{
    name        = \ensuremath{q_{\widehat{\gls{encpar}}}(\gls{latent}|\gls{obs})} ,
    description = {optimal variational posterior of the \acrshort{vae}, mapping $\gls{obs}\mapsto\gls{latent}$ parametrized by \gls{encpar}} ,
    type        = abbrev,
    parent      = vaes,
}

\newglossaryentry{encpar}{
    name        = \ensuremath{\boldsymbol{\phi}} ,
    description = {parameters of the variational posterior \gls{q}} ,
    type        = abbrev,
    parent      = vaes,
}

\newglossaryentry{encparopt}{
    name        = \ensuremath{\widehat{\boldsymbol{\phi}}} ,
    description = {optimal parameters of the variational posterior \gls{q}} ,
    type        = abbrev,
    parent      = vaes,
}

\newglossaryentry{var_family}{
    name        = \ensuremath{\mathcal{Q}} ,
    description = {distribution family of the variational posterior \gls{q} } ,
    type        = abbrev,
    parent      = vaes,
}

\newglossaryentry{pz}{
    name        = \ensuremath{p_0(\gls{latent})} ,
    description = {latent prior distribution} ,
    type        = abbrev,
    parent      = vaes,
}

\newglossaryentry{px}{
    name        = \ensuremath{p_{\gls{decpar}}(\gls{obs})} ,
    description = {marginal likelihood } ,
    type        = abbrev,
    parent      = vaes,
}

\newglossaryentry{pdata}{
    name        = \ensuremath{p(\gls{obs})} ,
    description = {data distribution } ,
    type        = abbrev,
    parent      = vaes,
}

\newglossaryentry{mean_enc}{
    name        = \ensuremath{\mu_{\gls{latent}|\gls{obs}}} ,
    description = {mean encoder of the \acrshort{vae}, \ie, $\expectation{\gls{latent}\sim\gls{q}}\parenthesis{\gls{latent}}$, mapping $\gls{obs}\mapsto\gls{latent}$} ,
    type        = abbrev,
    parent      = vaes,
}

\newglossaryentry{var_cov}{
    name        = \ensuremath{\gls{cov}^{\gls{encpar}}_{\gls{latent}|\gls{obs}}} ,
    description = {covariance matrix of \gls{q}} ,
    type        = abbrev,
    parent      = vaes,
}

\newglossaryentry{sigmak}{
    name        = \ensuremath{{\sigma}_{k}^{\gls{encpar}}(\gls{obs})^{2}} ,
    description = {variance of \gls{q} in dimension $k$} ,
    type        = abbrev,
    parent      = vaes,
}

\newglossaryentry{sigmaopt}{
    name        = \ensuremath{\boldsymbol{\sigma}^{\gls{encparopt}}(\gls{obs})^{2}} ,
    description = {optimal variance of \gls{q}} ,
    type        = abbrev,
    parent      = vaes,
}

\newglossaryentry{sigmaoptk}{
    name        = \ensuremath{{\sigma}_{k}^{\gls{encparopt}}(\gls{obs})^{2}} ,
    description = {optimal variance of \gls{q} in dimension $k$} ,
    type        = abbrev,
    parent      = vaes,
}

\newglossaryentry{mu}{
    name        = \ensuremath{\boldsymbol{\mu}^{\gls{encpar}}(\gls{obs})} ,
    description = {mean of \gls{q}} ,
    type        = abbrev,
    parent      = vaes,
}

\newglossaryentry{muk}{
    name        = \ensuremath{{\mu}_{k}^{\gls{encpar}}(\gls{obs})} ,
    description = {mean of \gls{q} in dimension $k$} ,
    type        = abbrev,
    parent      = vaes,
}

\newglossaryentry{muopt}{
    name        = \ensuremath{\boldsymbol{\mu}^{\gls{encparopt}}(\gls{obs})} ,
    description = {optimal mean of \gls{q}} ,
    type        = abbrev,
    parent      = vaes,
}

\newglossaryentry{muoptk}{
    name        = \ensuremath{{\mu}_{k}^{\gls{encparopt}}(\gls{obs})} ,
    description = {optimal mean of \gls{q} in dimension $k$} ,
    type        = abbrev,
    parent      = vaes,
}

\newglossaryentry{gamma}{
    name        = \ensuremath{\gamma} ,
    description = {square root of the precision of the \gls{vae} decoder} ,
    type        = abbrev,
    parent      = vaes,
}

\newglossaryentry{betaloss}{
    name        = \ensuremath{\mathcal{L}_{\beta}} ,
    description = {\betavae loss function} ,
    type        = abbrev,
    parent      = vaes,
}

\newglossaryentry{pxz}{
    name        = \ensuremath{p_{\gls{decpar}}(\gls{obs}|\gls{latent})} ,
    description = {conditional distribution of the decoded samples of the \acrshort{vae}, mapping $\gls{latent}\mapsto\gls{obs}$, parametrized by \gls{decpar}} ,
    type        = abbrev,
    parent      = vaes,
}
\newglossaryentry{pzx}{
    name        = \ensuremath{p_{\gls{decpar}}(\gls{latent}|\gls{obs})} ,
    description = {true posterior distribution of the decoded samples of the \acrshort{vae}, mapping $\gls{obs}\mapsto\gls{latent}$, parametrized by \gls{decpar}} ,
    type        = abbrev,
    parent      = vaes,
}
\newglossaryentry{decpar}{
    name        = \ensuremath{\boldsymbol{\theta}} ,
    description = {parameters of the decoder \gls{pxz}} ,
    type        = abbrev,
    parent      = vaes,
}

\newglossaryentry{invdeccomp}{
    name        = \ensuremath{{g}^{\gls{decpar}}} ,
    description = {inverse decoder component} ,
    type        = abbrev,
    parent      = vaes,
}

\newglossaryentry{invdec}{
    name        = \ensuremath{\mathrm{\boldsymbol{g}}^{\gls{decpar}}} ,
    description = {inverse decoder} ,
    type        = abbrev,
    parent      = vaes,
}

\newglossaryentry{distortion}{
    name        = \ensuremath{D} ,
    description = {Distortion of \cite{alemi_fixing_2018}, the same as the reconstruction term of the \acrshort{elbo} for $\beta=1$} ,
    type        = abbrev,
    parent      = vaes,
}

\newglossaryentry{rate}{
    name        = \ensuremath{R} ,
    description = {Rate of \cite{alemi_fixing_2018}, the same as the \acrshort{kld} term of the \acrshort{elbo} for $\beta=1$} ,
    type        = abbrev,
    parent      = vaes,
}

\newglossaryentry{lindec}{
    name        = \ensuremath{\boldsymbol{\mathrm{W}}} ,
    description = {weight matrix of a linear decoder} ,
    type        = abbrev,
    parent      = vaes,
}

\newglossaryentry{linenc}{
    name        = \ensuremath{\boldsymbol{\mathrm{V}}} ,
    description = {weight matrix of a linear encoder} ,
    type        = abbrev,
    parent      = vaes,
}

\newglossaryentry{imas}{type=abbrev,name=\acrlong{ima},description={\nopostdesc}}

\newglossaryentry{mixing}{
    name        = \ensuremath{\inv{g}} ,
    description = {inverse of the learned unmixing of the \acrshort{ima}, mapping $\gls{latent}\mapsto\gls{obs}$ } ,
    type        = abbrev,
    parent      = imas,
}

\newglossaryentry{lin_mixing}{
    name        = \ensuremath{A} ,
    description = {ground-truth \emph{linear} mixing process of the \acrshort{ima}, mapping $\gls{latent}\mapsto\gls{obs}$ } ,
    type        = abbrev,
    parent      = imas,
}

\newglossaryentry{cima_local}{
    name        = \ensuremath{c_{\acrshort{ima}}} ,
    description = {local \acrshort{ima} contrast } ,
    type        = abbrev,
    parent      = imas,
}

\newglossaryentry{cima_global}{
    name        = \ensuremath{C_{\acrshort{ima}}} ,
    description = {global \acrshort{ima} contrast } ,
    type        = abbrev,
    parent      = imas,
}

\newglossaryentry{source}{
    name        = \ensuremath{s} ,
    description = {sources (\acrshort{ica} equivalent of latents)} ,
    type        = abbrev,
    parent      = imas,
}

\newglossaryentry{rec_s}{
    name        = \ensuremath{\boldsymbol{y}} ,
    description = {reconstructed sources} ,
    type        = abbrev,
    parent      = imas,
}

\newglossaryentry{p_source}{
    name        = \ensuremath{p_{\gls{latent}}} ,
    description = {source distribution} ,
    type        = abbrev,
    parent      = imas,
}

\newglossaryentry{imaloss}{
    name        = \ensuremath{\mathcal{L}_{\gls{ima}}} ,
    description = {\gls{ima} loss function} ,
    type        = abbrev,
    parent      = imas,
}

\NewDocumentCommand{\cima}{ O{\gls{dec}} O{\gls{latent}}  }{\ensuremath{\gls{cima_local} ( #1\!,  #2) }\xspace}
\NewDocumentCommand{\Cima}{ O{\gls{dec}} O{\ensuremath{p_0}  }}{\ensuremath{\gls{cima_global} ( #1,  #2) }\xspace}

\newglossaryentry{gps}{type=abbrev,name=\acrlongpl{gp},description={\nopostdesc}}
\newglossaryentry{gpr}{
    name        = \ensuremath{\mathcal{GP}} ,
    description = {Gaussian Process} ,
    type        = abbrev,
    parent      = gps,
}

\newglossaryentry{gpker}{
    name        = \ensuremath{k} ,
    description = {kernel function} ,
    type        = abbrev,
    parent      = gps,
}

\newglossaryentry{gpcov}{
    name        = \ensuremath{\mathcal{K}} ,
    description = {$\gls{numdata}\times\gls{numdata}$ covariance matrix of a \acrshort{gp}} ,
    type        = abbrev,
    parent      = gps,
}

\makeglossaries

\newglossaryentry{system}{type=abbrev,name=LTI systems,description={\nopostdesc}}
\newglossaryentry{statemat}{
    name        = \ensuremath{\mat{A}} ,
    description = {state transition matrix} ,
    type        = abbrev,
    parent        = system,
}
\newglossaryentry{controlmat}{
    name        = \ensuremath{\mat{B}} ,
    description = {control matrix} ,
    type        = abbrev,
    parent        = system,
}

\newglossaryentry{outmat}{
    name        = \ensuremath{\mathbf{C}} ,
    description = {observation matrix} ,
    type        = abbrev,
    parent        = system,
}

\newglossaryentry{markovmat}{
    name        = \ensuremath{\mat{\mathrm{G}}} ,
    description = {Markov parameter matrix} ,
    type        = abbrev,
    parent        = system,
}

\newglossaryentry{tf}{
    name        = \ensuremath{\mathbf{H}} ,
    description = {transfer function} ,
    type        = abbrev,
    parent        = system,
}

\newglossaryentry{complexdiscrfreq}{
    name        = \ensuremath{z} ,
    description = {complex, discrete frequency variable} ,
    type        = abbrev,
    parent        = system,
}

\newcommand{\tfz}{\ensuremath{\gls{tf}\parenthesis{\gls{complexdiscrfreq}}}\xspace}

\newglossaryentry{sys_noise_cov}{
    name        = \ensuremath{{\mathbf{R}_{v}}} ,
    description = {system noise covariance matrix} ,
    type        = abbrev,
    parent        = system,
}

\newglossaryentry{sys_noise}{
    name        = \ensuremath{\boldsymbol{\varepsilon}^{\mathbf{\mathrm{x}}}} ,
    description = {process noise vector} ,
    type        = abbrev,
    parent        = system,
}

\newglossaryentry{meas_noise_cov}{
    name        = \ensuremath{{\mathbf{R}_{z}}} ,
    description = {measurement noise covariance matrix} ,
    type        = abbrev,
    parent        = system,
}

\newglossaryentry{noise_cov_alpha}{
	name        = \ensuremath{\alpha_{cov}} ,
	description = {covariance weighting coefficient} ,
	type        = abbrev,
    parent        = system,
}

\newglossaryentry{innovation}{
	name        = \ensuremath{\mathbf{d}} ,
	description = {innovation vector} ,
	type        = abbrev,
    parent        = system,
}

\newglossaryentry{residual}{
	name        = \ensuremath{\mathbf{\epsilon}} ,
	description = {residual vector} ,
	type        = abbrev,
    parent        = system,
}

\newglossaryentry{meas_noise}{
    name        = \ensuremath{\boldsymbol{\varepsilon}^{\mathbf{\mathrm{y}}}} ,
    description = {measurement noise vector} ,
    type        = abbrev,
    parent        = system,
}

\newglossaryentry{sys_f}{
	name        = \ensuremath{\mathbf{f}} ,
	description = {system function for nonlinear systems} ,
	type        = abbrev,
	parent        = system,
}

\newglossaryentry{obs_f}{
	name        = \ensuremath{\mathbf{g}} ,
	description = {observation function for nonlinear systems} ,
	type        = abbrev,
	parent        = system,
}

\newglossaryentry{state}{
	name        = \ensuremath{\mathbf{x}} ,
	description = {state vector} ,
	type        = abbrev,
	parent        = system,
}

\newglossaryentry{State}{
	name        = \ensuremath{\mathcal{X}} ,
	description = {state space},
	type        = abbrev,
	parent        = system,
}

\newglossaryentry{statedim}{
	name        = \ensuremath{d_{\myvec{x}}},
	description = {control dimension} ,
	type        = abbrev,
	parent        = system,
}

\newglossaryentry{err_cov}{
	name        = \ensuremath{\mathbf{P}_{\gls{state}}} ,
	description = {state error covariance matrix} ,
	type        = abbrev,
	parent        = system,
}

\newglossaryentry{err_cov_y}{
	name        = \ensuremath{\mathbf{P}_{\gls{output}}} ,
	description = {output error covariance matrix} ,
	type        = abbrev,
	parent        = system,
}
\newglossaryentry{err_cov_xy}{
	name        = \ensuremath{\mathbf{P}_{\gls{state}\gls{output}}} ,
	description = {state-output joint error covariance matrix} ,
	type        = abbrev,
	parent        = system,
}

\newcommand{\xt}[1][t]{\ensuremath{\gls{state}_{#1}}\xspace}

\newcommand{\yt}[1][t]{\ensuremath{\gls{output}_{#1}}\xspace}

\newglossaryentry{control}{
	name        = \ensuremath{\myvec{u}} ,
	description = {control vector} ,
	type        = abbrev,
	parent        = system,
}

\newglossaryentry{controldim}{
	name        = \ensuremath{d_{\myvec{u}}},
	description = {control dimension} ,
	type        = abbrev,
	parent        = system,
}

\newglossaryentry{Control}{
	name        = \ensuremath{\mathcal{U}} ,
	description = {control space},
	type        = abbrev,
	parent        = system,
}

\newglossaryentry{controllability}{
	name        = \ensuremath{\mat{M}_c} ,
	description = {controllability matrix} ,
	type        = abbrev,
	parent        = system,
}

\newglossaryentry{output}{
	name        = \ensuremath{\mathbf{y}} ,
	description = {output vector} ,
	type        = abbrev,
	parent        = system,
}
\newcommand{\ut}[1][t]{\ensuremath{\gls{control}_{#1}}\xspace}

\newglossaryentry{Output}{
	name        = \ensuremath{\mathcal{Y}} ,
	description = {output space},
	type        = abbrev,
	parent        = system,
}

\newglossaryentry{outputdim}{
	name        = \ensuremath{d_{\myvec{y}}},
	description = {output dimension} ,
	type        = abbrev,
	parent        = system,
}

\newglossaryentry{observability}{
	name        = \ensuremath{\mat{M}_o},
	description = {observability matrix} ,
	type        = abbrev,
	parent        = system,
}

\newglossaryentry{envvarmat}{
	name        = \ensuremath{\mat{\boldsymbol{\Delta}}},
	description = {environment variability matrix} ,
	type        = abbrev,
	parent        = system,
}

\newglossaryentry{hankel}{
	name        = \ensuremath{\mat{H}},
	description = {environment variability matrix} ,
	type        = abbrev,
	parent        = system,
}

\newglossaryentry{kalman_gain}{
    name        = \ensuremath{\mathbf{K}} ,
    description = {Kalman gain} ,
    type        = abbrev,
    parent      = system,
}

\makeglossaries

\newacronym{mpa}{MPA}{Measure Preserving Automorphism}
\newacronym{iid}{i.i.d.}{independent and identically distributed}
\newacronym{vmf}{vMF}{von Mises-Fisher}
\newacronym{nivmf}{nivMF}{non-isotropic von Mises-Fisher}
\newacronym{pd}{PD}{positive definite}
\newacronym{psd}{PSD}{positive semi-definite}
\newacronym{nd}{ND}{negative definite}
\newacronym{nsd}{NSD}{negative semi-definite}

\newacronym{ode}{ODE}{Ordinary Differential Equation}
\newacronym{pde}{PDE}{Partial Differential Equation}

\newacronym{lhs}{LHS}{left hand side}
\newacronym{rhs}{RHS}{right hand side}

\newacronym{rv}{RV}{random variable}

\newacronym{ae}{AE}{AutoEncoder}
\newacronym{lae}{LAE}{Linear Autoencoder}
\newacronym{vae}{VAE}{Variational Autoencoder}
\newacronym{cvvae}{CV-VAE}{Constant-Variance Variational Autoencoder}
\newacronym{ivae}{iVAE}{Identifiable Variational Autoencoder}
\newacronym{rae}{RAE}{Regularized Autoencoder}
\newacronym{grae}{GRAE}{Gaussian Regularized Autoencoder}
\newacronym{lvm}{LVM}{Latent Variable Model}

\newacronym[longplural=Gaussian Processes]{gp}{GP}{Gaussian Process}
\newacronym{gplvm}{GPLVM}{Gaussian Process Latent Variable Model}
\newacronym{rbf}{RBF}{Radial Basis Function}

\newcommand{\betavae}{$\beta$-\gls{vae}\xspace}

\newacronym{kld}{KL}{Kullback-Leibler Divergence}
\newacronym{elbo}{{\text{\upshape ELBO}}}{evidence lower bound}
\newacronym{pca}{PCA}{Principal Component Analysis}
\newacronym{ppca}{PPCA}{Probabilistic Principal Component Analysis}
\newacronym{ebm}{EBM}{Energy-Based Model}
\newacronym{cca}{CCA}{Canonical Correlation Analysis}

\newacronym{mi}{MI}{Mutual Information}

\newacronym{icm}{ICM}{Independent Causal Mechanisms}
\newacronym{sms}{SMS}{Sparse Mechanism Shift}
\newacronym{sem}{SEM}{Structural Equation Model}
\newacronym{lingam}{LiNGAM}{Linear Non-Gaussian Acyclic Model}
\newacronym{dag}{DAG}{Directed Acyclic Graph}
\newacronym{anm}{ANM}{Additive Noise Model}
\newacronym{cd}{CD}{Causal Discovery}
\newacronym{crl}{CRL}{Causal Representation Learning}
\newacronym{hmm}{HMM}{Hidden Markov Model}

\newacronym{ica}{ICA}{Independent Component Analysis}
\newacronym{nlica}{NLICA}{nonlinear Independent Component Analysis}
\newacronym{bss}{BSS}{Blind Source Separation}

\newacronym{ima}{{\text{\upshape IMA}}}{Independent Mechanism Analysis}
\newacronym{igci}{IGCI}{Information Geometric Causal Inference}
\newacronym{cdf}{CdF}{Causal de Finetti}

\newacronym{nce}{NCE}{Noise Contrastive Estimation}
\newacronym{pcl}{PCL}{Permutation-Contrastive Learning}
\newacronym{tcl}{TCL}{Time-Contrastive Learning}
\newacronym{iia}{IIA}{Independent Innovation Analysis}

\newacronym{ar}{AR}{AutoRegressive}
\newacronym{var}{VAR}{Vector AutoRegressive}
\newacronym{nvar}{NVAR}{Nonlinear Vector AutoRegressive}

\newacronym{ai}{AI}{Artificial Intelligence}
\newacronym{ml}{ML}{Machine Learning}
\newacronym{dl}{DL}{Deep Learning}
\newacronym{rl}{RL}{Reinforcement Learning}
\newacronym{ssl}{SSL}{Self-Supervised Learning}

\newacronym{cl}{CL}{Contrastive Learning}
\newacronym{dcl}{DCL}{Debiased Contrastive Learning}
\newacronym{scl}{SCL}{Spectral Contrastive Learning}
\newacronym{gcl}{GCL}{Graph Contrastive Learning}
\newacronym{alphacl}{$\alpha$-CL}{$\alpha$-Contrastive Learning}
\newacronym{arcl}{ArCL}{Augmentation-robust Contrastive Learning}

\newacronym{vince}{VINCE}{Variational InfoNCE}
\newacronym{rince}{RINCE}{Robust InfoNCE}
\newacronym{aggnce}{AggNCE}{Aggregated InfoNCE}
\newacronym{mcinfonce}{MCInfoNCE}{Monte-Carlo InfoNCE}

\newacronym{gmc}{GMC}{Geometric Multimodal Contrastive Learning}
\newacronym{looc}{LooC}{Leave-one-out Contrastive Learning}
\newacronym{npc}{NPC}{Negative-Positive Coupling}
\newacronym{cpc}{CPC}{Contrastive Predictive Coding}
\newacronym{nlp}{NLP}{Natural Language Processing}
\newacronym{gdl}{GDL}{Geometric Deep Learning}
\newacronym{msn}{MSN}{Masked Siamese Networks}
\newacronym{ifm}{IFM}{Implicit Feature Modification}

\newacronym{dnn}{DNN}{Deep Neural Network}
\newacronym{nn}{NN}{Neural Network}
\newacronym{ann}{ANN}{Artificial Neural Network}

\newacronym{nc}{NC}{Neural Collapse}
\newacronym{cdt}{CDT}{Class-Dependent Temperature}
\newacronym{mlp}{MLP}{Multi-Layer Perceptron}
\newacronym{fc}{FC}{Fully Connected}

\newacronym{cn}{conv}{Convolutional layer}
\newacronym{cnn}{CNN}{Convolutional Neural Network}
\newacronym{gnn}{GNN}{Graph Neural Network}

\newacronym{rnn}{RNN}{Recurrent Neural Network}
\newacronym{lstm}{LSTM}{Long Short-Term Memory}
\newacronym{gru}{GRU}{Gated Recurrent Unit}
\newacronym{relu}{ReLU}{Rectified Linear Unit}
\newacronym{bn}{BN}{Batch Normalization}
\newacronym{dbn}{DBN}{Decorrelated Batch Normalization}

\newacronym{gan}{GAN}{Generative Adversarial Network}

\newacronym{sgd}{SGD}{Stochastic Gradient Descent}
\newacronym{adam}{ADAM}{Adaptive Moment Estimation}
\newacronym{svd}{SVD}{Singular Value Decomposition}
\newacronym{wls}{WLS}{Weighted Least Squares}

\newacronym{sam}{SAM}{Sharpness-Aware Minimization}
\newacronym{samba}{SAMBA}{SAM-Based Autoencoder}
\newacronym{vi}{VI}{Variational Inference}
\newacronym{mfvi}{MFVI}{Mean Field Variational Inference}

\newacronym{dgp}{DGP}{Data Generating Process}
\newacronym{map}{MAP}{Maximum A Posteriori}
\newacronym{mle}{MLE}{Maximum Likelihood Estimation}

\newacronym{etf}{ETF}{Equiangular Tight Frame}

\newacronym{mse}{MSE}{Mean Squared Error}
\newacronym{mae}{MAE}{Mean Absolute Error}
\newacronym{ce}{{\text{\upshape CE}}}{Cross Entropy}

\newacronym{sid}{SID}{Structural Intervention Distance}
\newacronym{shd}{SHD}{Structural Hamming Distance}

\newacronym{mcc}{MCC}{Mean Correlation Coefficient}
\newacronym{mig}{MIG}{Mutual Information Gap}
\newacronym{dci}{DCI}{Disentanglement Completeness Informativeness score}

\newacronym{api}{API}{Application Programming Interface}
\newacronym{cpu}{CPU}{Central Processing Unit}
\newacronym{gpu}{GPU}{Graphics Processing Unit}

\newacronym{lti}{LTI}{Linear Time-Invariant}
\newacronym{zoh}{ZOH}{Zero-Order Hold}

\newacronym{gt}{{\text{\upshape GT}}}{ground truth}

\usepackage{tikz}
\usetikzlibrary{positioning, arrows.meta}

\usepackage{amsmath}
\usepackage{amssymb}
\usepackage{mathtools}
\usepackage{bm}

\definecolor{figblue}{HTML}{4A90E2}
\definecolor{figred}{HTML}{D0021B}
\definecolor{figpurple}{HTML}{7030A0}
\definecolor{figgreen}{HTML}{00B050}

\usepackage[american,siunitx]{circuitikz}
\usetikzlibrary{arrows,shapes,calc, positioning, patterns, decorations, decorations.markings,quotes}
\tikzset{rotarrow/.pic={
\draw[thin,->] (-0.2,-0.2)  to [out=-60,in=60, looseness=4] ++(0,0.4) node [above=1mm] {\tikzpictext};
},
}

\pdfstringdefDisableCommands{%
  \def\gls#1{<#1>}%
  \def\glspl#1{<#1>}%
  \def\acrshort#1{<#1>}%
  \def\acrlong#1{<#1>}%
  \def\acrfull#1{<#1>}%
}

\vbadness=10000
\hbadness=10000

\hfuzz=350pt
\usepackage[capitalize,noabbrev]{cleveref}
\usepackage{cleveref}
\crefname{section}{\S}{\S\S}
\crefname{subsection}{\S}{\S\S}
\crefname{subsubsection}{\S}{\S\S}
\crefname{figure}{Fig.}{Figs.}
\crefname{prop}{Prop.}{Props.}
\crefname{appendix}{Appx.}{Appxs.}
\crefname{algorithm}{Alg.}{Algs.}
\crefname{theorem}{Thm.}{Thms.}
\crefname{definition}{Defn.}{Defns.}
\crefname{cor}{Corollary}{Corollaries}
\crefname{lem}{Lem.}{Lems.}
\crefname{table}{Tab.}{Tabs.}
\crefname{assumption}{Assum.}{Assums.}
\crefname{example}{Ex.}{Exs.}

\let\ORGhypersetup\hypersetup
\protected\def\hypersetup{\ORGhypersetup}
\pdfstringdefDisableCommands{%
  \def\hypersetup#1{}%
  \let\Cref\crtCref
  \let\cref\crtcref
  \def\gls#1{<#1>}%
  \def\glspl#1{<#1>}%
  \def\acrshort#1{<#1>}%
  \def\acrlong#1{<#1>}%
  \def\acrfull#1{<#1>}%
}

\setglossarysection{subsection}

\newtheorem{assumption}[theorem]{Assumption}

\usepackage{enumitem}

\newcounter{definition}
\newlist{defprop}{enumerate}{2}
\setlist[defprop]{label={\normalfont(\roman*)},ref=\thedefinition(\roman*)}

\crefname{defpropi}{Def.}{Defns.}

\newcounter{assum}
\newlist{assumprop}{enumerate}{2}
\setlist[assumprop]{label={\normalfont(\roman*)},ref=\theassum(\roman*)}
\crefname{assumpropi}{Assum.}{Assums.}

\newcommand{\bbb}[1]{\mathbf{#1}}

\newcommand{\alt}[1]{\widetilde{#1}}

\newcommand{\RR}{\mathbb{R}}

\allowdisplaybreaks

\newcommand{\calX}{\gls{State}}

\newcommand{\calY}{\gls{Output}}
\newcommand{\calU}{\gls{Control}}

\newcommand{\sig}{\gls{std}}

\newcommand{\Del}{\gls{envvarmat}}

\newcommand{\mD}{{\bbb{D}}}

\newcommand{\mG}{{\gls{markovmat}}}

\newcommand{\mH}{{\bbb{H}}}
\newcommand{\mI}{{\Id{}}}

\newcommand{\mP}{{\bbb{P}}}

\newcommand{\mT}{{\bbb{T}}}

\newcommand{\dx}{d_{\bx}}
\newcommand{\dy}{d_{\by}}
\newcommand{\du}{d_{\bu}}

\newcommand{\bx}{\gls{state}}
\newcommand{\by}{\gls{output}}
\newcommand{\bu}{\gls{control}}
\newcommand{\bh}{\bm{h}}
\newcommand{\beps}{\bm{\varepsilon}}

\title[An Interventional Perspective on Identifiability in Gaussian LTI Systems]{An Interventional Perspective on Identifiability in Gaussian LTI Systems with Independent Component Analysis}
\usepackage{times}

\makeatletter
\renewcommand*{\thanks}[1]{%
  \protected@xdef\@thanks{\@thanks
    \protect\footnotetext[\arabic{footnote}]{#1}}%
}
\makeatother

\author{%
 \Name{Goutham Rajendran}$^{*1}$
 \quad
 \Name{Patrik Reizinger}\thanks{*Equal Contribution}$^{*2,3,4,5}$
 \quad
 \Name{Wieland Brendel}$^2$
 \quad 
 \Name{Pradeep Ravikumar}$^1$\\
 \addr
 $^1$ Machine Learning Dept., Carnegie Mellon University, Pittsburgh, USA\\
 $^2$ Max Planck Institute for Intelligent Systems, T\"ubingen, Germany\\
 $^3$ University of T\"ubingen, T\"ubingen, Germany\\
 $^4$ International Max Planck Research School for Intelligent Systems (IMPRS-IS)\\
 $^5$ European Laboratory for Learning and Intelligent Systems (ELLIS)
}

\begin{document}

\maketitle

\begin{abstract}%
   We investigate the relationship between system identification and intervention design in dynamical systems. While previous research demonstrated how identifiable representation learning methods, such as Independent Component Analysis (ICA), can reveal cause-effect relationships, it relied on a passive perspective without considering how to collect data. Our work shows that in Gaussian Linear Time-Invariant (LTI) systems, the system parameters can be identified by introducing diverse intervention signals in a multi-environment setting. By harnessing appropriate diversity assumptions motivated by the ICA literature, our findings connect experiment design and representational identifiability in dynamical systems. We corroborate our findings on synthetic and (simulated) physical data. Additionally, we show that Hidden Markov Models, in general, and (Gaussian) LTI systems, in particular,  fulfil a generalization of the Causal de Finetti theorem with continuous parameters. The project's repository is at \href{https://github.com/rpatrik96/lti-ica}{\texttt{github.com/rpatrik96/lti-ica}}.
\end{abstract}

\begin{keywords}%
LTI Systems, dynamical systems, interventions,
  Independent Component Analysis, identifiability, experiment design
\end{keywords}

\section{Introduction}
\label{sec:intro}

Dynamical systems model temporal phenomena and are prevalent in physics and engineering. They are often \gls{lti}, \eg, electronic circuits consisting of resistors, capacitors, and inductors, or even some hydraulic and electromechanical systems~\citep{bondgraph2011book}. 
Due to their practical relevance, control theory focuses on \gls{lti} systems to understand and control them.

\gls{lti} system identification~\citep{aastrom1971system,ljung1998system,Schoukens2004sysident}, \ie, learning the model parameters, has been intensely studied since the 1960s, starting with Rudolf Kálmán's seminal work~\citep{kalman1960new}
---nonetheless, this field is still quite active, e.g. the ICML 2022 outstanding paper award went to a recent theoretical work on learning mixtures of linear dynamical systems (\cite{chen2022learning}).
Dynamical system identification has two main approaches: 1) in the \textit{temporal} domain regression is used to minimize the \gls{mse}, which yields \gls{mle} for Gaussian \glspl{rv}~\citep{ljung1998system}; 2) in the \textit{frequency} domain the (discrete) Fourier transform is deployed~\citep{ljung1998system,Schoukens2004sysident}.

On the other hand,
causality~\citep{spirtes2000causation, pearl_causality_2009} and \gls{ica}~\citep{comon1994independent,hyvarinen_independent_2000}
developed independently from dynamical
systems theory, though all three fields attempt to explain
natural phenomena via \textit{identifiable} statistical models.
Here, identifiability means that a unique parameter set fits the model, and it can be unambiguously recovered and used for downstream tasks such as explanation, planning, or generalization.
Dynamical systems are inextricably linked to causality since the arrow of time prescribes causality.
Despite their similarities, these fields use different perspectives:
in control theory, interventions and control signals (\eg, by applying force to contract a spring) provide an active perspective: \ie, system identifiability results from interactions. On the other hand, \gls{ica} studies \textit{passive} identifiability from pre-collected data by imposing distributional and/or functional assumptions.

\begin{figure*}[t]
    \centering
    \includegraphics[scale=.5, 
    trim={0cm 4cm 0cm 3cm},
    clip]{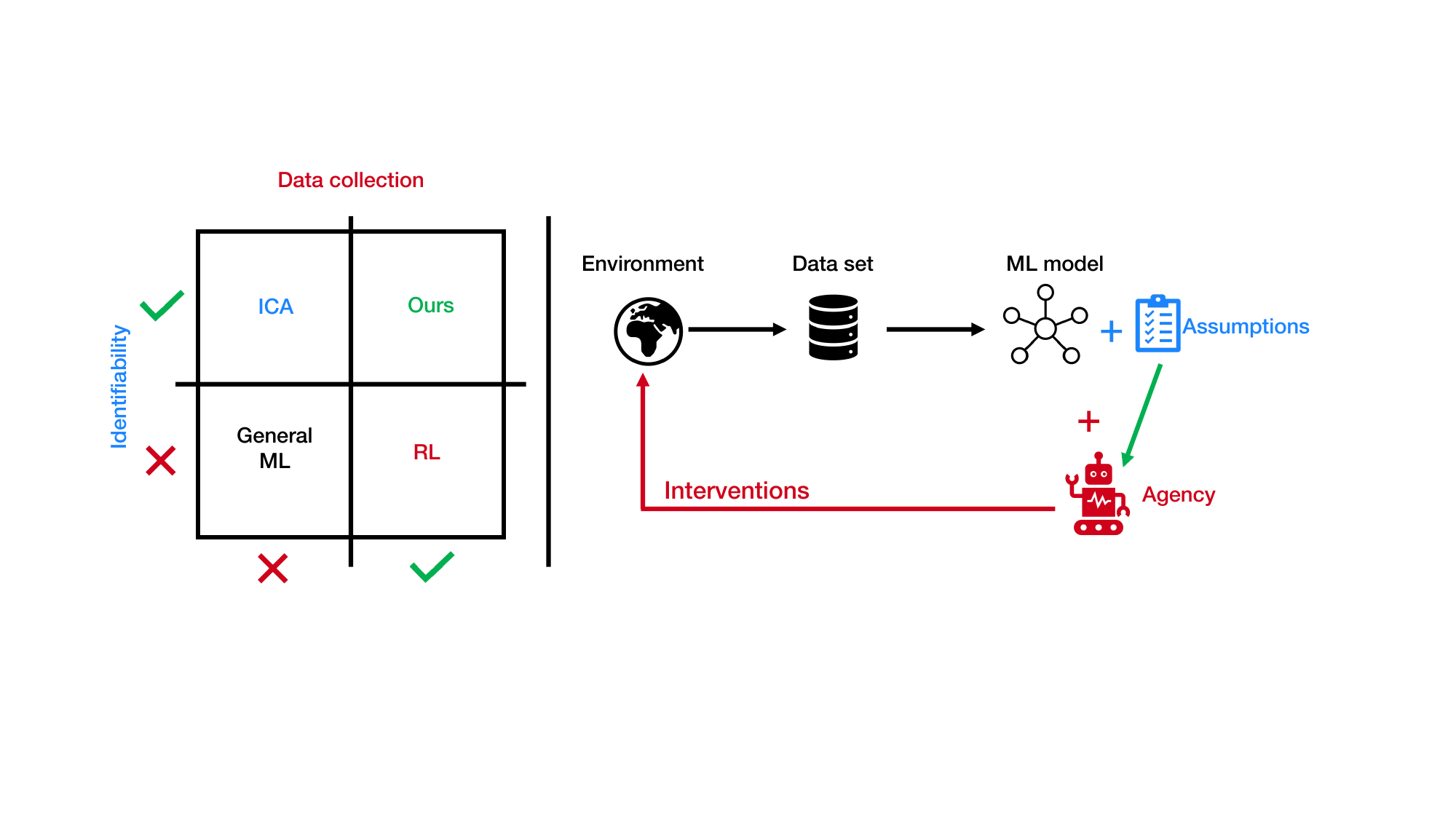}
    \caption{
    \textbf{Left:} \gls{ml} methods categorized based on active data collection (interventions) and identifiability. \textbf{Right:} components of the training pipeline for each method on the left. General \gls{ml} methods use pre-collected data to learn a representation (black components only); {\color{figred}{Reinforcement Learning (RL)}} additionally leverages interventions via agency (\ie, interactions with the world; black+{\color{figred}red}); {\color{figblue} Independent Component Analysis (ICA)} uses pre-collected data with underlying assumptions to achieve identifiability ({\color{figblue}blue}+black); whereas {\color{figgreen}our method} uses assumptions about the system to design interventions, \ie, actively collecting data to achieve identifiability ({\color{figred}red}+{\color{figblue}blue}+black+{\color{figgreen}green})
    }
    \label{fig:fig1}
\end{figure*}

Our work connects these perspectives: we provide an active data collection strategy---relying on sufficiently varying environments---with identifiability guarantees in Gaussian \gls{lti} systems. Our results suggest that equipping \gls{ica} with
active data collection can yield interventional identifiability in \gls{crl}, as illustrated in \cref{fig:fig1}\footnote{Grouping all other \gls{ml} methods into one category is obviously a simplification; we do this to stress that, in general, for most practical problems, a data set is given, and the world (the data generating process) is not explicitly modeled; though data-specific inductive biases (\eg, using \glspl{cnn} for images) are used}.
Our learning method maximizes the control signals' log-likelihood,
by only assuming knowledge of the control signal distribution (a zero-mean factorized Gaussian with known diagonal covariance) but not the control--observation pairs (also called trajectories),
which relaxes the assumptions of conventional regression-based algorithms.
We further emphasize the causality connection by showing that the recently proposed \gls{cdf} theorem~(\citet{guo_causal_2022}) is satisfied in \glspl{hmm} (as a superset of \gls{lti} systems), and provide an example for Gaussian \gls{lti} systems.
This work relies on \citep{reizinger_jacobian-based_2023,guo_causal_2022} and recent developments in the \gls{ica} and \gls{crl} literatures \citep{hyvarinen_unsupervised_2016, scholkopf2021towards}.
Our contributions are:
\begin{itemize}[nolistsep,leftmargin=*]
    \item We prove formally and demonstrate that diverse control signals across multiple environments suffice for identifying a Gaussian \gls{lti} system.
    Therefore, active, diverse data collection can enable system identification, giving a strategy for practitioners for data collection.
    \item We propose an estimation method based on log-likelihood maximization for system identification in the multi-environment setting.
    \item We show that \acrlong{hmm} in general, and (Gaussian) \gls{lti} systems in particular, fulfil a generalization of the \acrlong{cdf} theorem with continuous parameters.
\end{itemize}

\section{Background}
\label{sec:bg}
\paragraph{\acrlong{lti} Systems}
We focus on learning the system parameters of \textit{discrete} \gls{lti} systems, which are first-order auto-regressive dynamical systems modeling temporal data.

\begin{definition}[Discrete \gls{lti} System]\label{def:lti}
    For time step $t$ with a hidden state ${\xt\in \calX \subseteq \RR^{\dx}}$, an observed state ${\yt \in \calY \subseteq \RR^{\dy}}$,
    and a (hidden) control signal ${\ut \in \calU \subseteq \RR^{\du}}$ with system parameters ${\gls{statemat} \in \RR^{\dx \times \dx}}$, ${\gls{controlmat} \in \RR^{\dx \times \du}}$ and ${\gls{outmat} \in \RR^{\dy \times \dx}}$, a discrete \gls{lti} system's dynamics is given by
    \begin{align}
    \label{eq:lti_state}
    \begin{split}
        \xt[t+1] &= \gls{statemat}\xt + \gls{controlmat}\ut + \gls{sys_noise}\\
        \yt &= \gls{outmat}\xt + \gls{meas_noise},
    \end{split}
    \end{align}
\end{definition}
where $\gls{sys_noise}, \gls{meas_noise}$ are independent noise variables referred to as the process and observation noise, modeling epistemic $(\gls{sys_noise})$ and aleatoric $(\gls{meas_noise})$ uncertainty. \gls{statemat} is the state transition, \gls{controlmat} the control, and \gls{outmat} the observation matrix. We make standard assumptions on the \gls{lti} system as follows:

\begin{assumption}[\gls{lti} system properties]\label{assum:lti}
    We assume that the \gls{lti} system of Defn. \ref{def:lti} satisfies:
    \begin{assumprop}[nolistsep]
        \item The system is controllable and observable; \ie, the controllability  matrix \gls{controllability} (Defn. \ref{def:controllability}) and the observability matrix \gls{observability} (Defn. \ref{def:observability}) are full rank;
        \item the control signal $\bu$ has a zero-mean
        factorized Gaussian
        distribution, $\gls{sys_noise}, \gls{meas_noise}$ are Gaussian and all three are independent.
        \item The system is stable, \ie, $\gls{statemat}$ has eigenvalues with magnitude less than 1.
    \end{assumprop}
\end{assumption}
\textit{Observability} and \textit{controllability} ensure that the entire state space can be observed and controlled, \ie, we can collect information about the whole of \calX.
\textit{Gaussianity} is a common distributional assumption, and \textit{system stability} is necessary to prevent the system from exploding---\ie, a finite control signal induces a finite system output. Next, we define the \textit{transfer function} of an \gls{lti} system, which characterizes the system in frequency domain and is widely used in engineering---it also elucidates the sufficient equivalence class of system parameters we need to identify (\cf Lem. \ref{lem:eq_lti_transfer}).

\begin{definition}[Transfer function]
    The transfer function $\tfz$ of  a noiseless \gls{lti} system relates the control signal and (scalar) output components in the discrete frequency domain ($\gls{complexdiscrfreq}$ is the discrete complex frequency variable):
    \begin{align}
        \tfz = \gls{outmat}(\gls{complexdiscrfreq}\mI - \gls{statemat})^{-1}\gls{controlmat}
    \end{align}
\end{definition}
The transfer function is the $\gls{complexdiscrfreq}$-transform of the impulse response, which  is a theoretical construct describing the system output for a Dirac-delta excitation~\citep{ljung1998system}.
Practitioners often use the transfer function for analysis and design, therefore identifiability guarantees for transfer functions are highly desirable.
Learning the system parameters from observed data
is traditionally estimated via the Markov parameter matrix given by:
\begin{definition}[Markov parameter matrix]
    For an \gls{lti} system and horizon $T \ge 0$, the Markov parameter matrix is
\begin{align}
     \mG = [\mI, \gls{outmat}\gls{controlmat}, \gls{outmat}\gls{statemat}\gls{controlmat}, \ldots, \gls{outmat}\gls{statemat}^{T - 1}\gls{controlmat}]
\end{align}
\end{definition}
Once the Markov parameter matrix is estimated, the Ho-Kálmán algorithm~\citep{ho1966effective} can be used for system identification. Our approach is similar, though working with multiple environments poses additional complexity.

\paragraph{\glspl{sem}.}
We exploit the inherent connection of dynamical systems to causality \citep{spirtes2000causation, pearl_causality_2009} and focus on the linear case~\citep{peters2017elements, rajendran2021structure, squires2022causal}, where the causal relationships among $d$ observed variables $\mathrm{\myvec{x}}=[\mathrm{x}_1, \ldots, \mathrm{x}_d]$ are given as $\mathrm{\myvec{x}} = \mat{A}\mathrm{\myvec{x}} + \beps,$ where the matrix $\mat{A}$ encodes a \gls{dag} via its non-zero entries. This model is closely related to \gls{lti} systems but without the temporal component: non-temporal \glspl{sem} only model instantaneous effects, \eg, when the discrete time steps are longer than the propagation time of a change within a system; though some extensions consider both instantaneous and temporal effects~\citep{hyvarinen_estimation_2010,lippe_icitris_2022}.

\paragraph{\acrfull{ica}.}
\gls{ica}~\citep{comon1994independent, hyvarinen_independent_2001} models observed variables via a deterministic function $\bm{f}$ and independent source (latent) variables $\beps$, \ie $\mathrm{\myvec{x}}=\bm{f}(\beps)$. \gls{ica} studies identifiable models where $\beps$ can be recovered up to indeterminacies, \eg, scaling, permutation, and coordinate transformations. Recent work has generalized this to latent variable models with potentially dependent sources~\citep{kivva2021learning, hyvarinen2023identifiability} and \gls{crl} \citep{scholkopf2021towards}. However, the connection to \gls{lti} systems has not been fully realized.

\paragraph{Identifiability.}
Identifiability postulates the uniqueness of system parameters that fit the data, \eg, $\bm{f}, \beps$ in \gls{ica} and $\gls{statemat}, \gls{controlmat}, \gls{outmat}$ in \gls{lti} systems. If multiple parameter sets generate the same observed data, then it is impossible
to uniquely learn the ground-truth parameters.
Since non-identifiability causes problems during learning~\citep{d2022underspecification, wang2021posterior}, identifiability is crucial for provable system identification. For \gls{lti} systems practice, perfect identifiability is impossible since we do not directly observe the raw control signals. Even having access to the true Markov parameters could only guarantee system parameter identifiability up to similarity transformations \citep{oymak2019non}. However, this is sufficient for \gls{lti} systems since the transfer function is invariant to similarity transformations (\cf Lem. \ref{lem:eq_lti_transfer}).
\section{Main Results}
\label{sec:theory}
We prove that Gaussian \gls{lti} systems can be identified by actively designing control signals to form a sufficiently diverse set of environments (\cf \cref{subsec:setting} for details).
This is inspired by previous works on multi-environmental identifiability in causality and \gls{ica}, where
data from multiple environments is passively observed~\citep{hyvarinen_unsupervised_2016,gresele_incomplete_2019} and then used for learning the underlying parameters. However, in several physical systems, we can apply agency (control) to design experiments. 
\subsection{Intuition}\label{subsec:intuition}

\begin{wrapfigure}{r}{0.32\textwidth}
        \centering
        \vspace{-1em}
        \includegraphics[scale=.4]{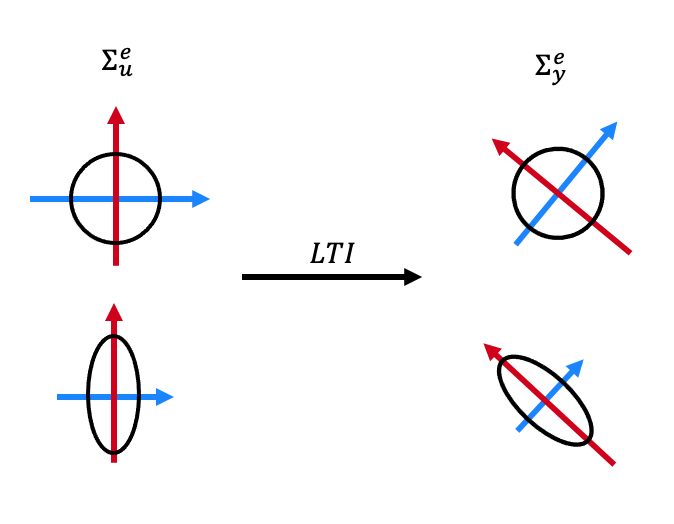}
        \vspace{-2em}
\end{wrapfigure}

Our main result provides a \textit{sufficient} condition for identifying Gaussian \gls{lti} systems from multiple environments. and also suggests how to design the experiments (data collection) to yield identifiability.
Our claim hinges on a sufficient variability condition. The technical details are in \cref{subsec:setting}, whereas our main theorem in \cref{subsec:main}. Now, we provide an intuition. For this, let us assume that the state \xt, the control signal \ut, and the observed signal \yt are two-dimensional. We know that for Gaussian \xt, \ut (and noise variables), \yt will also be Gaussian, which can be expressed in closed form.

The figure on the right shows the relationship (described by the \gls{lti} system equations; \cf \cref{app:theory,app:likelihood}) between the covariances of \ut and \yt for two environments. For simplicity, assume that applying the system dynamics is an isometry, \ie, it will only rotate the covariance of \ut $(\gls{cov}_{\gls{control}}^e)$ into the covariance of \yt ($\gls{cov}_{\gls{output}}^e$; this assumption is only for the intuition). In this case, if \ut has an isotropic Gaussian distribution, then there will be a rotation indeterminacy, since any two axes yield independent components for \yt. However, by adding a new experiment, where \ut is not isotropic, reduces this indeterminacy to permutations.

\subsection{Setting}\label{subsec:setting}

Assume access to \gls{numenv} environments with index $e$, where we observe trajectories from the \gls{lti} system
\begin{align}
    \label{eq:lti}
    \bx^e_{t + 1} &= \gls{statemat} \bx^e_t + \gls{controlmat} \bu^e_t \qquad
    \by^e_t = \gls{outmat} \bx^e_t + \beps_t^e, \qquad \text{where} \ e\!\in\! E\!=\!\braces{0;\dots; \gls{numenv}\!-\!1}
\end{align}
For Gaussian control signals $\bu^e_t$ (see below), we can absorb the state noise into $\ut^e$ (we also dropped the superscript $\by$ from the observation noise).
We actively select the control signals for each environment in the form
\begin{align}
   \bu^e_t \sim \prod_{i = 1}^{\du} \normal{0}{(\sig_i^e)^2}\label{eq:u_factor}
\end{align}
where $\normal{\gls{mean}_{\gls{control}}}{\gls{cov}_{\gls{control}}}$ denotes a Gaussian distribution with mean vector $\gls{mean}_{\gls{control}}$ and covariance matrix $\gls{cov}_{\gls{control}} = \diag{\sig^2_1, \dots, \sig^2_{\gls{controldim}}}$---Gaussianity is a standard assumption for \gls{lti} systems.
W.l.o.g, we assume a zero initial state $\bx_0^e = \zeros{}$ and zero mean control signals, but our techniques directly extend to non-zero initial states and mean-shifted control signals with almost no modifications (simply by centering the data via the empirical mean, see Lem.~\ref{lem:centering}); therefore, we focus on the zero-mean case.
For identifiability, we need to observe a sufficiently diverse set of environments, quantified via:

\begin{definition}[Environment variability matrix]
    For an arbitrary base environment (we use $0 \in E$), we define the environment variability matrix $\gls{envvarmat}\in \RR^{\gls{numenv} \times \du}$ as
    \begin{align}
        \forall  e \in E, i \le \du: \Del_{e, i} = \frac{1}{(\gls{std}^e_i)^2} - \frac{1}{(\gls{std}^0_i)^2}.
    \end{align}
\end{definition}
To achieve sufficient variability, we require that $|E| > \du$ and $\gls{envvarmat}$ has full column rank.
\begin{assumption}[Environment Variability]\label{assn:data_diversity}
\gls{envvarmat} has column rank $\du$.
\end{assumption}
Intuitively, this assumption captures that the control signals should be ``different'' across environments. 
We only design and observe the variances $(\sig_i^e)^2$, but not the raw control signals $\ut^e$.
If we had access to $\ut^e$, then correlation computations would suffice to identify the system \citep{bakshi2023new, oymak2019non}.
\Cref{assn:data_diversity} is not a restrictive assumption because, for instance, if the practitioner chooses the variances from reasonable distributions, e.g., uniformly from a nonempty bounded interval, then well-known results from random matrix theory show that this assumption holds with high probability~\citep{rudelson2009smallest, rudelson2008littlewood}---to see this, we use the well-known fact that such distributions have bounded sub-Gaussian norm~\citep{vershynin2018high}.
\subsection{Main identifiability result}\label{subsec:main}

We state our identifiability result for observations with a fixed horizon $T\! >\! 0$ from \gls{numenv} environments.
\begin{restatable}{theorem}{Main}[\gls{lti} system identifiability with sufficient variability]\label{thm:ident_lti}
For \gls{lti} systems satisfying \cref{assum:lti,assn:data_diversity}, the Markov parameter matrix $\mat{\mG}$ is identifiable up to permutations and diagonal scaling.
\end{restatable}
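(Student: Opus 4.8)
The plan is to turn the identifiability question into a non-stationary linear-Gaussian source-separation problem and then run a sufficient-variability argument driven by \cref{assn:data_diversity}. First I would stack a length-$T$ trajectory into $\mathbf{Y}^e=(\by^e_1,\dots,\by^e_T)$ and, using $\bx^e_0=\zeros{}$ and \cref{eq:lti}, write $\mathbf{Y}^e=\mathcal{G}\,\mathbf{U}^e+\boldsymbol{\eta}^e$, where $\mathbf{U}^e=(\bu^e_0,\dots,\bu^e_{T-1})$, $\boldsymbol{\eta}^e$ is the stacked observation noise, and $\mathcal{G}$ is the block lower-triangular Toeplitz matrix whose $k$-th diagonal is the Markov parameter block $\m\mA^{k}\mB$; so identifying $\mG$ reduces to identifying $\mathcal{G}$. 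By \cref{assum:lti} all quantities are jointly zero-mean Gaussian and mutually independent, hence $\mathbf{Y}^e\sim\normal{\zeros{}}{\boldsymbol{\Sigma}^e_Y}$ with $\boldsymbol{\Sigma}^e_Y=\mathcal{G}\,(\Id{T}\otimes\mathbf{D}^e)\,\transpose{\mathcal{G}}+\boldsymbol{\Sigma}_{\boldsymbol{\eta}}$, where $\mathbf{D}^e=\diag{(\sig_1^e)^2,\dots,(\sig_{\du}^e)^2}$ and the noise covariance $\boldsymbol{\Sigma}_{\boldsymbol{\eta}}$ does \emph{not} depend on $e$. Since a centered Gaussian is determined by its covariance, two parameter choices are observationally equivalent iff they induce the same family $\{\boldsymbol{\Sigma}^e_Y\}_{e\in E}$.

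Next I would strip the nuisances. Subtracting the base environment cancels the environment-invariant noise, $\boldsymbol{\Sigma}^e_Y-\boldsymbol{\Sigma}^0_Y=\mathcal{G}\big(\Id{T}\otimes(\mathbf{D}^e-\mathbf{D}^0)\big)\transpose{\mathcal{G}}$, whose column space is contained in --- and, under \cref{assn:data_diversity}, equals --- that of $\mathcal{G}$. Controllability and observability (\cref{assum:lti}), for a long enough horizon $T$ and implicitly $\dy\ge\du$, make the stacked input-to-output map injective, so $\boldsymbol{\Sigma}_{\boldsymbol{\eta}}$ is itself identifiable and can be subtracted off, reducing matters to the noiseless case with an invertible (``square'') mixing. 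On this reduced problem, passing to precision (inverse-covariance) matrices and again subtracting the base environment gives $(\boldsymbol{\Sigma}^e_Y)^{-1}-(\boldsymbol{\Sigma}^0_Y)^{-1}=\invtranspose{\mathcal{G}}\big(\Id{T}\otimes\diag{\Del_{e,1},\dots,\Del_{e,\du}}\big)\inv{\mathcal{G}}$, which is exactly where the reciprocal-variance differences defining $\Del$ enter. Equating this with a competitor $\widetilde{\mathcal{G}}$ and writing $\mathbf{L}:=\inv{\widetilde{\mathcal{G}}}\mathcal{G}$, observational equivalence collapses to the single requirement that $\invtranspose{\mathbf{L}}\big(\Id{T}\otimes\diag{\Del_{e,\cdot}}\big)\inv{\mathbf{L}}$ be diagonal for every $e$.

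The heart of the proof is the standard non-stationary-source argument applied to this constraint \citep{hyvarinen_unsupervised_2016}: by \cref{assn:data_diversity} the vectors $\Del_{e,\cdot}$, $e\in E$, span $\RR^{\du}$, hence the matrices $\diag{\Del_{e,\cdot}}$ span all $\du\times\du$ diagonal matrices, and the only linear maps conjugating that whole family to diagonal matrices are generalized permutations --- a permutation composed with an invertible diagonal scaling. This pins the indeterminacy on one time block; to lift it to all of $\mathcal{G}$ I would use the block lower-triangular Toeplitz (causal) structure: the inverse of such a matrix is again block lower-triangular Toeplitz, so $\mathbf{L}$ has this form, and feeding a generic such $\mathbf{L}$ into the diagonality constraint and peeling off block-rows starting from the last one forces $\mathbf{L}=\Id{T}\otimes\mathbf{L}_0$ for a single $\du\times\du$ generalized permutation $\mathbf{L}_0$. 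Hence every Markov parameter block obeys $\m\mA^{k}\mB=\widetilde{\m}\widetilde{\mA}^{k}\widetilde{\mB}\,\mathbf{L}_0$, \ie, $\mG$ is determined up to a common column permutation and diagonal rescaling; non-zero initial states and mean-shifted controls reduce to this case by centering (Lem.~\ref{lem:centering}).

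The step I expect to be the real obstacle is the combination of the reduction in the second paragraph with the Toeplitz lifting in the third: (i) separating out the environment-invariant observation noise and justifying the restriction to an invertible mixing --- this is where left-invertibility of the length-$T$ input-output operator, and hence controllability/observability together with a long enough horizon (and $\dy\ge\du$), is indispensable; and (ii) showing that the indeterminacy, which \cref{assn:data_diversity} on its own constrains only on a $\du$-dimensional family of diagonal matrices, must in fact act identically across all $T$ time blocks. Everything else --- the Gaussian second-moment reduction and the non-stationary-source argument itself --- is routine.
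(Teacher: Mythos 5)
Your proposal is correct and lands on the same conclusion, but it takes a genuinely different route from the paper's. The paper works time-step-by-time-step with the marginal density of $\by_t^e$, modelled as a single Gaussian pushed through the cumulative mixing $\mT_t=\sum_{i\le t}\m\mA^{i-1}\mB$; it applies a change of variables, forms the log-odds of environment $e$ against the base environment, and differentiates the resulting functional identity twice in $\bu_i,\bu_j$ to obtain $\sum_k(\mH_t)_{k,i}(\mH_t)_{k,j}\Del_{e,k}=0$, whence the full column rank of $\Del$ forces $\mH_t=\inv{\alt{\mT}_t}\mT_t$ to be a scaled permutation. You never touch densities: you stack the trajectory, use that a centred Gaussian is determined by its covariance, pass to precision matrices, subtract the base environment so that precisely the reciprocal-variance differences $\Del_{e,i}$ appear, and then combine the simultaneous-diagonalization-by-congruence lemma with the algebra of block lower-triangular Toeplitz (causal) operators. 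The decisive step is the same in substance---in both arguments \cref{assn:data_diversity} annihilates the off-diagonal interaction coefficients; your requirement that $\transpose{\mathbf{M}_0}\mathbf{D}_0\mathbf{M}_0$ be diagonal for a spanning family of diagonal $\mathbf{D}_0$ is exactly the paper's vanishing second cross-derivatives---but the packaging differs, and your version buys two things. First, it works with the joint law of the whole trajectory via the convolution operator $\mathcal{G}$, whereas the paper's per-time-step claim that $\sum_i\m\mA^{i-1}\mB\bu^e_{t-i}$ is distributed as $\mT_t$ applied to a single Gaussian identifies the sum of the mixings with the mixing of the sum, which your block-Toeplitz formulation avoids. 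Second, your peeling argument on the Toeplitz blocks of $\mathbf{L}$ explicitly establishes that the permutation and scaling are the \emph{same} across all time blocks, a uniformity the paper leaves implicit when passing from $\mT_t=\alt{\mT}_t\mP\mD$ for each $t$ to identifiability of the individual Markov blocks; note that, as you anticipate, the naive spanning claim alone is insufficient here because the matrices $\Id{T}\otimes\diag{\Del_{e,\cdot}}$ do not span all $T\du\times T\du$ diagonals, so the Toeplitz structure of $\mathbf{L}$ is genuinely needed and your block-row peeling does close that hole. Conversely, you share the paper's implicit reliance on a square invertible mixing ($\dy=\du$ with $\m\mB$ invertible, so that $\inv{\mathcal{G}}$, respectively $\inv{\mT_t}$, exists) and its informal treatment of observation noise (your subtraction of the environment-invariant $\boldsymbol{\Sigma}_{\boldsymbol{\eta}}$ is at the same level of rigor as the paper's appeal to a standard denoising argument), so neither constitutes a gap relative to the paper's own proof.
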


\begin{proof}[Sketch]
    Intuitively, each independent environment
    controls a distinct rank-$1$ subspace of the underlying parameters.
    If the environments capture $\du$ linearly independent facets, we can probe the entire space of the system parameters and learn them up to similarity transformations.
    
    Formally, we use change of variables to express the observational density as a function of the control signal parameters. Then we compute the log-odds for each environment \wrt an arbitrary base environment. This yields an equation system involving the environment variability matrix \gls{envvarmat}, with coefficients being quadratic functions of the control signals.
    We then compute second derivatives to arrive at a linear equation system. Assuming a full-rank environment variability matrix yields the identifiability of the Markov parameter matrix. The proof is deferred to \cref{app:theory}.
\end{proof}
\Cref{thm:ident_lti} suggests an active data collection scheme for identifying Gaussian \gls{lti} systems and gives an active (intervention-based) view of identifiability theory instead of a passive (relying on pre-collected data samples) view: \ie, the control signal \gls{control} should be specified such that \cref{assn:data_diversity} holds (\eg, Gaussians with variances sampled from uniform distributions on nonempty bounded intervals).
\cref{thm:ident_lti} proves identifiability of the Markov
 parameter matrix \gls{markovmat}, from which the system parameters can be recovered. 
 
 For the sake of completeness, we state \textit{how} to do this next. After identifying \gls{markovmat}, standard techniques \citep{ho1966effective, oymak2019non} can extract the underlying system parameters, provided the system identification problem is well-conditioned. For the final corollary, we define the Hankel matrix and assume it to be full-rank.

 \begin{definition}[Hankel matrix]
    For integer parameters $T_1, T_2 \ge 0$, define the $(T_1, T_2)$ Hankel matrix $\mH$ to be the $T_1 \times T_2$ block matrix with the $(i, j)$ block being $\gls{outmat}\gls{statemat}^{i + j - 2}\gls{controlmat}$.
 \end{definition}
\begin{assumption}
    \label{assn:hankel}
    There exist integers $T_1, T_2 \ge 0$ such that $T_1 + T_2 \le T$ and the associated $(T_1, T_2)$ Hankel Matrix $\mH$ has rank $\dx$.
\end{assumption}

\begin{restatable}{corollary}{MainSystem}[Identifiability of \gls{lti} systems under sufficient variability]\label{thm:ident_system_lti}
    For \gls{lti} systems satisfying \cref{assum:lti,assn:data_diversity,assn:hankel}, the matrices $\gls{statemat}, \gls{controlmat}, \gls{outmat}$ are identifiable up to a similarity transformation and diagonal scaling.
\end{restatable}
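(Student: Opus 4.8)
The plan is to derive \cref{thm:ident_system_lti} from \cref{thm:ident_lti} by running a Ho-K\'alm\'an--type minimal realization on the identified Markov parameters, while carefully tracking the residual permutation-and-scaling indeterminacy.

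\textbf{Step 1: set up the identified objects.} First I would record that \cref{thm:ident_lti} recovers the Markov parameter matrix only up to a single permutation $\gls{p}$ and an invertible diagonal $\gls{d}$ acting on the $\du$ input coordinates: every block is obtained as $\gls{outmat}\gls{statemat}^{k}\gls{controlmat}\gls{p}\gls{d}$ rather than $\gls{outmat}\gls{statemat}^{k}\gls{controlmat}$ (the output coordinates are anchored because $\by$ is observed directly, and \cref{assn:data_diversity} pins down everything else only up to relabelling/rescaling the input channels). Writing $\gls{controlmat}' \coloneqq \gls{controlmat}\gls{p}\gls{d}$, the $(T_1,T_2)$ Hankel matrix assembled from the identified parameters is $\widetilde{\mH} = \mH\,(\Id{T_2}\otimes\gls{p}\gls{d})$, whose $(i,j)$ block is $\gls{outmat}\gls{statemat}^{i+j-2}\gls{controlmat}'$. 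Since $\gls{p}\gls{d}$ is invertible, \cref{assn:hankel} still gives $\rank{\widetilde{\mH}} = \rank{\mH} = \dx$, and the same holds for the once-shifted Hankel matrix $\widetilde{\mH}^{\uparrow}$ with $(i,j)$ block $\gls{outmat}\gls{statemat}^{i+j-1}\gls{controlmat}'$, which is available because $T_1 + T_2 \le T$.

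\textbf{Step 2: minimal realization.} Then I would take any rank-$\dx$ factorization $\widetilde{\mH} = \mathcal{O}\mathcal{C}$ with $\mathcal{O}\in\RR^{T_1\dy\times\dx}$, $\mathcal{C}\in\RR^{\dx\times T_2\du}$ (\eg via the truncated \acrshort{svd}). Because the system is controllable and observable (\cref{assum:lti}), the true extended observability factor (row-blocks $\gls{outmat}\gls{statemat}^{i-1}$) and the controllability factor (column-blocks $\gls{statemat}^{j-1}\gls{controlmat}'$) both have full rank $\dx$, so this factorization is unique up to an invertible state change of basis $\mat{T}\in\RR^{\dx\times\dx}$. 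Reading $\widehat{\gls{outmat}}$ off the first block row of $\mathcal{O}$, $\widehat{\gls{controlmat}}$ off the first block column of $\mathcal{C}$, and setting $\widehat{\gls{statemat}} = \pinv{\mathcal{O}}\,\widetilde{\mH}^{\uparrow}\,\pinv{\mathcal{C}}$, a direct computation using the block-Hankel shift structure gives
\begin{align}
    \widehat{\gls{statemat}} = \mat{T}\gls{statemat}\mat{T}^{-1}, \qquad
    \widehat{\gls{controlmat}} = \mat{T}\gls{controlmat}\gls{p}\gls{d}, \qquad
    \widehat{\gls{outmat}} = \gls{outmat}\mat{T}^{-1}.
\end{align}
So $(\gls{statemat},\gls{controlmat},\gls{outmat})$ is recovered up to a similarity transformation $\mat{T}$ together with the input-channel permutation/scaling $\gls{p}\gls{d}$ carried over from \cref{thm:ident_lti}; by Lem.~\ref{lem:eq_lti_transfer} this class leaves the transfer function invariant (up to the matching relabelling/rescaling of the inputs), so the recovered system is equivalent to the ground truth for all downstream purposes~\citep{ho1966effective,oymak2019non}.

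\textbf{Main obstacle.} The hard part will be the bookkeeping in Step 1: one must verify that the indeterminacy from \cref{thm:ident_lti} enters \emph{only} as a right factor $\Id{T_2}\otimes\gls{p}\gls{d}$ shared by all Markov blocks --- so that it collapses into the redefinition $\gls{controlmat}\mapsto\gls{controlmat}\gls{p}\gls{d}$ and never entangles with the realization's intrinsic similarity freedom --- and that the rank-$\dx$ condition of \cref{assn:hankel} survives right-multiplication by $\gls{p}\gls{d}$, which is exactly what makes the Ho-K\'alm\'an factorization well posed. Once that is settled, the rest is the classical minimal-realization argument, so I do not expect further difficulty there.
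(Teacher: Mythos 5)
Your proposal is correct and follows essentially the same route as the paper, which simply invokes \cref{thm:ident_lti} to obtain the Markov parameters and then cites the Ho-K\'alm\'an/standard realization results of \citet{ho1966effective,oymak2019non} under \cref{assn:hankel}. Your version merely makes explicit the bookkeeping the paper leaves implicit, namely that the permutation-and-scaling indeterminacy enters as a common right factor on the input channels, preserves the Hankel rank, and commutes with the realization's similarity freedom.
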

\begin{proof}
    By \cref{thm:ident_lti}, we recover the Markov parameter matrix. Then, \cref{assn:hankel} guarantees that the Hankel matrix is full-rank. Thus, we can use standard system identification results \citep{oymak2019non, ho1966effective} to recover $\gls{statemat}, \gls{controlmat}, \gls{outmat}$ up to a similarity transformation (which includes permutations) and diagonal scaling.
\end{proof}
\Cref{thm:ident_lti} also implies the identifiability of the practically important transfer function \tfz:
\begin{corollary}
    For \gls{lti} systems satisfying \cref{assum:lti,assn:data_diversity,assn:hankel},
the transfer function is identifiable up to permutations and diagonal scaling.
\end{corollary}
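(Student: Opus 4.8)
The plan is to derive this corollary directly from \cref{thm:ident_system_lti} together with the similarity-invariance of the transfer function (Lem.~\ref{lem:eq_lti_transfer}); the only genuine work is to track how the residual permutation-and-diagonal-scaling indeterminacy acts on $\tfz$. First I would invoke \cref{thm:ident_system_lti}: under \cref{assum:lti,assn:data_diversity,assn:hankel}, any recovered realization $(\widehat{\mathbf{A}},\widehat{\mathbf{B}},\widehat{\mathbf{C}})$ of $(\gls{statemat},\gls{controlmat},\gls{outmat})$ is related to the ground truth by an invertible matrix $\mathbf{T}$ (the similarity transformation) together with a permutation matrix $\mathbf{P}$ and a diagonal matrix $\mathbf{D}$ acting on the input channels, i.e. $\widehat{\mathbf{A}} = \mathbf{T}\gls{statemat}\mathbf{T}^{-1}$, $\widehat{\mathbf{B}} = \mathbf{T}\gls{controlmat}\mathbf{P}\mathbf{D}$, and $\widehat{\mathbf{C}} = \gls{outmat}\mathbf{T}^{-1}$.

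Next I would substitute this into the definition $\tfz = \gls{outmat}(\gls{complexdiscrfreq}\mI - \gls{statemat})^{-1}\gls{controlmat}$ and compute the transfer function of the recovered system:
\begin{align*}
\widehat{\mathbf{H}}(\gls{complexdiscrfreq}) &= \widehat{\mathbf{C}}(\gls{complexdiscrfreq}\mI - \widehat{\mathbf{A}})^{-1}\widehat{\mathbf{B}} = \gls{outmat}\mathbf{T}^{-1}(\gls{complexdiscrfreq}\mI - \mathbf{T}\gls{statemat}\mathbf{T}^{-1})^{-1}\mathbf{T}\gls{controlmat}\mathbf{P}\mathbf{D} \\
&= \gls{outmat}\mathbf{T}^{-1}\mathbf{T}(\gls{complexdiscrfreq}\mI - \gls{statemat})^{-1}\mathbf{T}^{-1}\mathbf{T}\gls{controlmat}\mathbf{P}\mathbf{D} = \tfz\,\mathbf{P}\mathbf{D},
\end{align*}
where the middle step uses $(\gls{complexdiscrfreq}\mI - \mathbf{T}\gls{statemat}\mathbf{T}^{-1})^{-1} = \mathbf{T}(\gls{complexdiscrfreq}\mI - \gls{statemat})^{-1}\mathbf{T}^{-1}$, which is exactly the content of Lem.~\ref{lem:eq_lti_transfer}. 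Hence $\widehat{\mathbf{H}}(\gls{complexdiscrfreq}) = \tfz\,\mathbf{P}\mathbf{D}$, i.e. the transfer function is pinned down up to a right multiplication by a permutation matrix and a diagonal matrix --- equivalently, up to permutation and diagonal scaling of its columns (the input channels) --- which is the claim.

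The main obstacle, and the only step that is not routine, is justifying that the permutation-and-scaling indeterminacy enters purely as the right factor $\gls{controlmat}\mapsto\gls{controlmat}\mathbf{P}\mathbf{D}$ and does not entangle with $\gls{statemat}$ or $\gls{outmat}$. To see this I would return to \cref{thm:ident_lti}: there the Markov parameter matrix $\gls{markovmat} = [\mI,\gls{outmat}\gls{controlmat},\gls{outmat}\gls{statemat}\gls{controlmat},\dots]$ is identified up to a single column permutation-and-scaling applied \emph{uniformly} across all blocks $\gls{outmat}\gls{statemat}^{k}\gls{controlmat}$; since every block carries the same right factor, and since (by \cref{assn:hankel} and Ho--Kálmán realization theory, as already used in \cref{thm:ident_system_lti}) a minimal realization of a fixed Markov sequence is unique up to a state-space similarity, once a state basis is fixed the only remaining freedom is $\gls{controlmat}\mapsto\gls{controlmat}\mathbf{P}\mathbf{D}$ together with a compensating $\mathbf{T}$. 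I would also remark that the diagonal scaling of the columns of $\gls{controlmat}$ (equivalently, of the columns of $\tfz$) is an intrinsically unavoidable indeterminacy: rescaling input channel $i$ by a nonzero $\alpha$ while dividing the $i$-th column of $\gls{controlmat}$ by $\alpha$ leaves every observed trajectory unchanged, consistent with the fact that we only design and observe the control variances $(\sig_i^e)^2$ and never the raw control signals.
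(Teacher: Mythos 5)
Your proposal is correct and follows essentially the same route as the paper: invoke the system-parameter identifiability result and then use the similarity-invariance of the transfer function, with your explicit computation of $\widehat{\mathbf{H}}(\gls{complexdiscrfreq}) = \tfz\,\mathbf{P}\mathbf{D}$ just spelling out what the paper leaves implicit. One small note: the invariance computation you attribute to Lem.~\ref{lem:eq_lti_transfer} is actually the content of Lem.~\ref{lem:eq_lti} (the forward direction), which is the lemma the paper cites.
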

\begin{proof}
    Using \cref{thm:ident_lti}, the system parameters $\gls{statemat}, \gls{controlmat}, \gls{outmat}$ are identifiable up to a similarity transformation and diagonal scaling. By Lem. \ref{lem:eq_lti}, the transfer function is invariant to similarity transformations, completing the proof.
\end{proof}

\subsection{Learning method}\label{subsec:learning}

Our learning method relies on maximizing the multi-environmental data log-likelihood. The following lemma shows that this objective leads to identifiability:
\begin{restatable}{lemma}{Likelihood}[Identifiability via the multi-environmental log-likelihood]
\label{thm:likelihood}
    Under \cref{assum:lti,assn:data_diversity,assn:hankel}, the parameters that maximize the log-likelihood of a Gaussian \acrshort{lti} system relate to the ground truth via a linear transformation; or, equivalently, the corresponding transfer function is equivalent to the ground truth up to permutations and scalings.
\end{restatable}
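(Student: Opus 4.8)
The plan is to reduce the statement to \cref{thm:ident_lti}, \cref{thm:ident_system_lti}, and Lem.~\ref{lem:eq_lti} by analysing the log-likelihood at the population level. First I would write down the law of an observed trajectory: fixing the horizon $T$ and stacking $\by^e_0, \dots, \by^e_{T-1}$, unrolling \eqref{eq:lti} exhibits the stacked observation as a fixed linear image --- built from the blocks $\gls{outmat}\gls{statemat}^k\gls{controlmat}$, \ie from the Markov parameter matrix $\mG$ --- of the stacked control signals together with the process and measurement noise. Since $\bu^e_t$ and both noise terms are independent zero-mean Gaussians under \cref{assum:lti}, the stacked trajectory is a zero-mean Gaussian whose covariance $\Sig^e$ is an explicit function of $\mG$ and of the per-environment variances $(\sig_i^e)^2$ (plus the fixed noise covariances); this is the change-of-variables computation carried out in \cref{app:likelihood}, and it is exactly why the objective needs only the control \emph{variances}, not the raw $\bu^e$. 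The multi-environment log-likelihood is then the sum over $e \in E$ of these Gaussian log-densities.

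Second I would pass to the population objective and use realizability. The ground-truth system induces trajectory covariances $\Sig^{e,\star}$ in each environment and lies in the model class, so by the nonnegativity of the KL divergence the expected per-environment log-likelihood $\expectation{\by^e}\log p_{\widehat{\gls{statemat}}, \widehat{\gls{controlmat}}, \widehat{\gls{outmat}}}(\by^e_{0:T-1})$ is maximized precisely when $\kl{p^\star(\by^e_{0:T-1})}{p_{\widehat{\gls{statemat}}, \widehat{\gls{controlmat}}, \widehat{\gls{outmat}}}(\by^e_{0:T-1})} = 0$, \ie when $\Sig^e = \Sig^{e,\star}$ for every $e \in E$ simultaneously (the parameters $\widehat{\gls{statemat}}, \widehat{\gls{controlmat}}, \widehat{\gls{outmat}}$ being shared across environments while only the variances differ). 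Hence any maximizer reproduces the true family $\{\Sig^{e,\star}\}_{e \in E}$ of trajectory distributions, and the finite-sample maximizer does so in the limit by standard consistency of maximum-likelihood estimation.

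Third I would invoke the identifiability results. Two parameter sets that produce the same observed distribution in every environment is exactly the situation treated by \cref{thm:ident_lti}: forming the log-odds of each environment against the base environment $0$ and differentiating twice in the control signal yields a linear system whose coefficient matrix is the full-rank environment-variability matrix $\Del$ (\cref{assn:data_diversity}), which forces the Markov parameter matrix of the maximizer to coincide with $\mG$ up to a permutation and a diagonal scaling. Adding \cref{assn:hankel} and applying \cref{thm:ident_system_lti} upgrades this to $\widehat{\gls{statemat}}, \widehat{\gls{controlmat}}, \widehat{\gls{outmat}}$ equalling $\gls{statemat}, \gls{controlmat}, \gls{outmat}$ up to a similarity transformation composed with a diagonal scaling --- the asserted linear transformation --- and Lem.~\ref{lem:eq_lti} collapses the similarity part, leaving the transfer function $\tfz$ identified up to permutations and scalings.

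The main obstacle is the second step rather than any computation: one must justify that the log-likelihood maximizer attains the population optimum and, more delicately, that equality of the Gaussian trajectory covariances $\Sig^e$ across the $|E| > \du$ environments is genuinely the premise that \cref{thm:ident_lti} consumes --- \ie that observing only the variances $(\sig_i^e)^2$ rather than the control signals $\bu^e$ introduces no degeneracy beyond the permutation-and-scaling already accounted for. The first step is then the routine linear-Gaussian marginalization of \cref{app:likelihood}, and the third is a direct appeal to \cref{thm:ident_lti}, \cref{thm:ident_system_lti}, and Lem.~\ref{lem:eq_lti}.
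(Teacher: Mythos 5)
Your argument is correct and lands on the same pivot as the paper---both proofs ultimately reduce the claim to \cref{thm:ident_lti}---but the middle of your route is genuinely different from the one in \cref{app:likelihood}. You maximize the marginal Gaussian likelihood of the stacked observation trajectory and use realizability plus nonnegativity of the KL divergence to conclude that any population maximizer reproduces the true per-environment trajectory distributions, which is exactly the ``two parameter sets generate the same data'' premise that \cref{thm:ident_lti} consumes; you then chain through \cref{thm:ident_system_lti} and Lem.~\ref{lem:eq_lti}. The paper instead works with the objective actually optimized in the experiments, namely the likelihood \eqref{eq:multienv_likelihood} of the control signal reconstructed from \emph{consecutive observation pairs} via a single matrix $\inv{\mat{T}}$ (so that $\ut = \inv{\mat{T}}(\yt[t+1];\yt)$), observes that this parametrization carries precisely the transfer-function indeterminacies of Lem.~\ref{lem:eq_lti_transfer}, and then breaks the residual rotational symmetry of the zero-mean Gaussian by hand: each environment whose variance profile differs from the base in one coordinate removes one rotational degree of freedom, so $\du$ such environments leave only permutations and scalings of $\inv{\mat{T}}$. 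Your KL step is the more careful treatment of ``maximizer $\Rightarrow$ distribution match,'' which the paper leaves implicit; what you give up is contact with the concrete training loss, since the control-signal likelihood differs from your observation likelihood by the $\ln\abs{\det \inv{\mat{T}}}$ Jacobian term that the paper's weighted-least-squares form folds into the ``constants.'' The worry you flag at the end is already discharged by \cref{thm:ident_lti} itself: its proof uses only equality of the observational densities across environments together with the full column rank of \gls{envvarmat}, so observing the variances $(\sig_i^e)^2$ rather than the raw controls introduces no indeterminacy beyond the permutation and diagonal scaling already in the statement.
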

The proof is deferred to \cref{app:likelihood} and builds on \cref{thm:ident_lti}.
We formulate the log-likelihood of the control signals (\cf \eqref{eq:multienv_likelihood}) and optimize it.
The model parameters are shared between environments; thus, by conditioning on the model parameters, we have a multivariate Gaussian log-likelihood.
Assuming $\ut^e$ has zero mean and that the environment-dependent covariance $\gls{cov}^e_{\gls{control}}$ is known, the multivariate Gaussian log-likelihood becomes a weighted least squares problem
, which emphasizes that \gls{ica}-based and regression-based methods are connected. The resulting loss is (up to constants):
\begin{align*}
    \gls{loss} \propto \sum_e\sum_t \transpose{(\ut^e)}\gls{cov}^e_{\gls{control}}\ut^e,
\end{align*}
where $e$ indexes the environments, $t$ the time steps.
Using this formulation, we learn the matrix $\inv{\mat{T}}$ (see \cref{app:likelihood}) numerically via gradient descent on the negative log-likelihood.

\subsection{Causal de Finetti connection}\label{subsec:cdf}
\citet{roweis_unifying_1999} unified \gls{ica}, the Kalman filter, and factor analysis for linear Gaussian systems; however, without a discussion on causality---Gaussianity is generally a prohibitive condition for causal discovery in linear systems~\citep{shimizu_linear_2006}. In this work, we provide a causal perspective on Gaussian \gls{lti} systems in the multienvironmental case; however, we need to elucidate why and how this fits into the literature.
\citet{guo_causal_2022} proved a causal version of the de Finetti theorem, showing that for binary and categorical variables, the cause and effect mechanisms are parameterized by independent parameters, statistically formulating the \gls{icm} principle~\citep{peters_elements_2018}. However, this theoretical result is prohibitive in practice due to requiring exponentially many independence tests. \citet{reizinger_jacobian-based_2023} provided the first insight that contrastive \gls{nlica} can be thought of as a practical realization of the \gls{cdf} theorem (\cf \cref{app:sec_cdf} for details). 

Here, we show that trajectories in \glspl{hmm} (\gls{lti} systems are a special case of \glspl{hmm}) satisfy the conditions of the \gls{cdf} theorem and provide an example for Gaussian \gls{lti} systems. Thus, we confirm the conjecture of \citet{guo_causal_2022}: at least in a special case, the \gls{cdf} theorem extends to continuous \gls{cdf} parameters.
Gaussian \gls{lti} systems are \glspl{hmm}, which, by definition, satisfy the following conditional independences for any time steps $l<t<k$:
    \begin{align}
        \xt[l<t] \perp \xt[k>t] | \xt; \qquad
        &\yt \perp \xt[k>t] | \xt, \label{eq:hmm_cond_ind}\\
        \intertext{\ie, the joint density factorizes~\citep[(3.3)]{roweis_unifying_1999} for a fixed $e$}
        \marginal{{\yt[1]^e, \dots, \yt[T]^e; \xt[1]^e, \dots, \xt[T]^e}; \theta^e, \psi^e}&= 
        \prod_{t=1}^{T} \conditional{\xt[t]^e}{\pai[t]^e;\theta^e} 
        \conditional{\yt^e}{\xt^e;\psi^e}\label{eq:joint_markov_fact},
    \end{align}
    where in \eqref{eq:joint_markov_fact} we included the distributional parameters (to make the correspondence to the \gls{cdf} theorem easier to see); 
    $T$ is the length of each trajectory, and $\pai[1]^e=\emptyset; \pai[t\neq 1]^e = \xt[t-1]^e$ the parents of $\xt^e$. The likelihood also factorizes over all environments, which are independent.
    The \gls{cdf} theorem posits that the multi-environmental joint density is a mixture of \acrshort{iid} \acrshortpl{rv}, where for each $e$ the density factorizes as in \eqref{eq:joint_markov_fact}---assuming the exchangeability (Defn.~\ref{def:exch_pairs}) of pairs $\parenthesis{\myvec{x}_t,\myvec{y}_t}_{t\in \mathbb{N}}$ and wo conditional indepencies in the underlying \gls{dag}, which, it turns out, are satisfied in \glspl{hmm}.
    The \gls{cdf} theorem states that this factorization is possible with \textit{independent} \gls{cdf} parameters, \ie, $\psi\! \perp\! \theta$, with corresponding measures $\mu, \nu$ ($\pai[1]^e=\emptyset; \pai[t\neq 1]^e = \xt[t-1]^e)$.
    \begin{align}
        \marginal{\braces{\yt[1]^e, \dots, \yt[T]^e; \xt[1]^e, \dots, \xt[T]^e}_{e=1}^{\gls{numenv}}}&= \int \prod_{e=1}^{\gls{numenv}}\prod_{t=1}^T \conditional{\yt^e}{\xt^e; \psi} \conditional{\xt[t]^e}{\pai[t]^e; \theta}d\mu(\theta)d\nu(\psi). \label{eq:cdf_implication}
    \end{align}

    If there exist unique $\theta=\theta_0$ and $\psi=\psi_0$ \gls{cdf} parameters with corresponding Dirac measures, \eqref{eq:cdf_implication} and \eqref{eq:joint_markov_fact} become equivalent.
    Moreover, \glspl{hmm} can be defined with continuous $\psi, \theta$; thus, showing that the there is a version of the \gls{cdf} theorem for continuous-valued parameters, which we demonstrate for Gaussian \gls{lti} systems in the next example.

\begin{example}[\gls{cdf} parameters in Gaussian \gls{lti} systems]
     The Markov factorization of Gaussian \gls{lti} systems (\cf \eqref{eq:joint_markov_fact}) consists of the conditional distributions describing the state dynamics and the observations:
    \begin{align}
        \conditional{\xt[t+1]}{\xt} &= \normal{\gls{statemat}\xt}{\gls{controlmat}\gls{cov}_{\gls{control}} \transpose{\gls{controlmat}}+\gls{cov}_{\gls{sys_noise}}}; \qquad
        \conditional{\yt}{\xt} = \normal{\gls{outmat}\xt}{\gls{statemat} \gls{cov}_{\xt}\transpose{\gls{statemat}}+\gls{cov}_{\gls{meas_noise}}} 
        \end{align}
        Now we collect the parameters of \conditional{\xt[t+1]}{\xt} and \conditional{\yt}{\xt} into $\theta$ and $\psi$; thus and make their presence explicit by writing \conditional{\xt[t+1]}{\xt; \theta} and \conditional{\yt}{\xt; \psi}, where
        \begin{align}        
        \theta &= \braces{\gls{statemat}; \gls{controlmat}; \gls{cov}_{\gls{control}}; \gls{cov}_{\gls{sys_noise}}}; \qquad
        \psi = \braces{\gls{outmat};
        \gls{cov}_{\gls{meas_noise}}}.
    \end{align}
    If we know $\theta, \psi$ then we can construct the Markov factorization in \eqref{eq:joint_markov_fact}. Then, by defining the corresponding measures to include the indeterminacies (\ie, similarity transformations that stay within the same equivalence class; \cf Lem. \ref{lem:eq_lti}), then we get \eqref{eq:cdf_implication}, showing that Gaussian \gls{lti} systems satisfy a \gls{cdf} theorem over continuous \gls{cdf} parameters.
\end{example}

\section{Experiments}
\label{sec:experiments}
\paragraph{Real-world example (DC motor).}
    We start the experiments section by describing a real-world \gls{lti} system and demonstrate that our method can successfully learn the model parameters (we measure this by comparing whether the control signals could be reconstructed; for the exact setting and metrics used, refer to the paragraphs ``Setup'' and ``Metric'' below).
    \begin{wrapfigure}{r}{0.45\textwidth}
        \centering
        \begin{tikzpicture}[damper/.style={thick,
            decorate,
            decoration={markings,  
                mark connection node=dmp,
                mark=at position 0.5 with 
                {
                    \node (dmp) [thick, inner sep=0pt, 
                    transform shape, 
                    rotate=-90, 
                    minimum width=15pt, 
                    minimum height=10pt, draw=none] {};
                    \draw [thick] ($(dmp.north east)+(4pt,0)$) -- 
                    (dmp.south east) -- (dmp.south west) -- 
                    ($(dmp.north west)+(4pt,0)$);
                    \draw [thick] ($(dmp.north)+(0,-8pt)$) -- 
                    ($(dmp.north)+(0,8pt)$);
        }}},]

        \draw (0,2) to[R=$R$,o-] ++(2,0) to[short,f=$i$] ++(0.1,0) to[L,cute inductor, l=$L$,]  ++(2,0)  to[short,  name=M1] ++(0,-2) to[short,-o] (0,0);
        \draw (M1) node[elmech,scale=0.7](M2){} +(-1,0) node [anchor=west] {$K$};
        \path (0,2)node[below]{+} -- node[midway]{$u$} (0,0)node[above]{-};
        \draw[thick] (M2.east) -- pic[pic text={$J,\theta$}]{rotarrow} ++(1.5,0) -- +(0,0.5) 
        -- +(0,-0.5) coordinate (N1);

\end{tikzpicture}
    \end{wrapfigure}
    Assume we have a DC motor, depicted in the figure on the right with the voltage $u$ as the control signal, $R$ and
    $L$ the armature resistance and conductance, $K$ the electromotive force constant,  $J$ and
    $D$ the rotor inertia and damping coefficient. The states  are the armature current $i$ and 
    the rotor angle  $\theta$.

    The DC motor is a physical system with a \textit{continuous} state-space representation, \ie, an \acrshort{ode} system specifying the time derivative of the states:
    \begin{align}
        \totalderivative{}{t}\begin{bmatrix} i \\ \theta \end{bmatrix} &= \begin{bmatrix} -R/L &  K/L \\ -K/J &  -D/J \end{bmatrix} \begin{bmatrix} i \\ \theta \end{bmatrix} +
         \begin{bmatrix} 1/L \\  0 \end{bmatrix}u; \qquad 
         y = \begin{bmatrix} 1 & 0 \\ 0 & 1 \end{bmatrix}\begin{bmatrix} i \\ \theta \end{bmatrix}
    \end{align}
    For our learning procedure, we need to convert this to a discrete representation in form of \eqref{eq:lti}. This entails two choices\footnote{In practice, we discretize the continuous system with the \texttt{cont2discrete} method in \texttt{scipy}}: selecting the discretization 1) method and 2) step.  The discretization method determines how the system dynamics is modelled between time steps; we choose the widely-used \gls{zoh} method, which assumes that between time steps, the state value remains the same. The discretization time step is important since it determines the stability of the simulation. That is, even if the system is stable in the sense of \cref{assum:lti}, the simulation might diverge if the step size is too big. Notably, this is a property of the numerical \acrshort{ode} solver~\citep{atkinson2011numerical}.

    The question is how we should choose the voltage distribution to identify the system parameters. We selected three environments satisfying \cref{assn:data_diversity} (here, the control signal is one-dimensional) and maximized the multi-environmental log-likelihood~\eqref{eq:multienv_likelihood}. We used \expnum{1}{-2} as the learning rate and time step, $50$ epochs with $3$ segments with $5,000$ data points each, and a batch size of $8$. Measuring the performance with the \gls{mcc} score, we achieved $0.999$ on both the training and validation sets, indicating our method's successful application to real-world problems.

\paragraph{Setup.} 
We run additional experiments as follows. We generate data from a controllable and observable Gaussian \gls{lti} system, as defined in \eqref{eq:lti} with observation noise set to zero and a factorized control signal~\eqref{eq:u_factor} with zero or non-zero-mean---In the latter case, we assume the mean to be known and include that in the log-likelihood~\eqref{eq:multienv_likelihood} to center the reconstructed control signal.
We experiment with
unknown intervention targets (i.e., \gls{controlmat} is full-rank and non-diagonal), and also with observing either \xt ($\gls{outmat}=\Id{}$) or \yt ($\gls{outmat}\neq\Id{}$)---the choice of neither \gls{controlmat} or \gls{outmat} is used as an inductive bias during training. We also compare performance when, in each environment, only one component of \ut has a different variance, yielding the minimal condition number (\ie, one) in \Del---this option is further discussed in \cref{app:subsec_choosing_sigma}. The discretization time step is set to \expnum{3}{-3}\footnote{The discrete time step, together with the chosen ODE solver (\ie, the algorithm that turns the continuous system into a discrete one (in our case, the forward Euler method) affects the stability of the \textit{simulated} system. \Ie, too large a time step could lead to divergence even if the modeled physical system is stable}. We learn the map from $(\yt[t+1]; \yt)\mapsto\ut$ as a single matrix (with orthogonally initialized weights) via \gls{sgd} with a learning rate of \expnum{3}{-3} and batch size of $64$, optimizing \eqref{eq:multienv_likelihood}. We use $(\gls{controldim}+1)$ environments with $12,000$ data points each and train for $4,000$ epochs. To ensure stability, we clip the gradient norms to $0.5$. Explicitly parameterizing \gls{statemat}, \gls{controlmat}, and \gls{outmat} (when applicable) yields inferior results; thus, we do not explore this approach in our experiments. 

\paragraph{Metric.} We report the \acrfull{mcc}~\citep{hyvarinen_unsupervised_2016} to measure the correlation between the learned and true control signals (for training, we do not use knowledge of the control signal, only of its covariance).
\gls{mcc} has been used in prior works~\citep{khemakhem2020ice, kivva2022identifiability} to quantify identifiability; it measures linear correlations up to permutation of the components. To compute the best permutation, a linear sum assignment problem is solved and finally, the correlation coefficients are computed and averaged.

\begin{table}[!ht] %
\setlength{\tabcolsep}{2pt}
    \caption{Validation of our identifiability claim, \ie, learning \gls{control} from observations with different $\gls{outmat}$ and $\gls{controlmat}$, and (non-)zero mean $\gls{mean}^e_{\gls{control}}$ for \ut. We use the minimal $(\gls{controldim}+1)$ number of environments. In the rightmost column, $\gls{controlmat}\neq\Id{}$, $\gls{outmat}\neq\Id{}$, $\gls{mean}^e_{\gls{control}}\neq\zeros{}$, and the $e^{th}$ variance component to $0.9999$, the others to $0.0001$, yielding a well-conditioned \Del\xspace (\cf \cref{app:subsec_choosing_sigma}). Mean and standard deviation are reported across 5 runs. \gls{controldim} is the dimensionality of \gls{control} ($\gls{statedim}=\gls{controldim}=\gls{outputdim}$), \gls{numenv} is the number of environments, \acrfull{mcc} measures identifiability in \brackets{0;1} (higher is better)}
    \label{tab:results1}
    \begin{center}
    \begin{small}
    \begin{sc}
    \resizebox{0.85\textwidth}{!}{%
        \begin{minipage}{\textwidth}
    \begin{tabular}{rr|ll|ll||ll|ll||l} \toprule
        \gls{controldim} & \gls{numenv} & \multicolumn{9}{c}{ \acrshort{mcc} $\uparrow$} \\
        & & \multicolumn{4}{c||}{$\gls{mean}_{\gls{control}}^e\neq\zeros{}$} & \multicolumn{4}{c||}{$\gls{mean}_{\gls{control}}^e=\zeros{}$} & $\gls{mean}_{\gls{control}}^e\neq\zeros{}$\\
        &  &\multicolumn{2}{c|}{$\gls{controlmat}=\Id{}$}  & \multicolumn{2}{c||}{$\gls{controlmat}\neq\Id{}$} &\multicolumn{2}{c|}{$\gls{controlmat}=\Id{}$}  & \multicolumn{2}{c||}{$\gls{controlmat}\neq\Id{}$} &$\gls{controlmat}\neq\Id{}$\\
        & &$\gls{outmat}=\Id{}$ & $\gls{outmat}\neq\Id{}$&$\gls{outmat}=\Id{}$ & $\gls{outmat}\neq\Id{}$ &$\gls{outmat}=\Id{}$ & $\gls{outmat}\neq\Id{}$&$\gls{outmat}=\Id{}$ & $\gls{outmat}\neq\Id{}$&$\gls{outmat}\neq\Id{}$\\\midrule
         {$2$} & 3 & $0.866\scriptscriptstyle\pm 0.033$ & $0.767\scriptscriptstyle\pm 0.158$ & $0.730\scriptscriptstyle\pm 0.191$ & $0.697\scriptscriptstyle\pm 0.090$ & $0.633\scriptscriptstyle\pm 0.104$ & $0.675\scriptscriptstyle\pm 0.132$ & $0.734\scriptscriptstyle\pm 0.158$ & $0.725\scriptscriptstyle\pm 0.115$ & $0.968\scriptscriptstyle\pm 0.055$  \\ %
         {$3$} & 4 &  $0.901\scriptscriptstyle\pm 0.054$ & $0.910\scriptscriptstyle\pm 0.061$ & $0.916\scriptscriptstyle\pm 0.044$ & $0.861\scriptscriptstyle\pm 0.062$ & $0.659\scriptscriptstyle\pm 0.201$ & $0.618\scriptscriptstyle\pm 0.241$ & $0.633\scriptscriptstyle\pm 0.110$ & $0.667\scriptscriptstyle\pm 0.109$ & $1.000\scriptscriptstyle\pm 0.000$\\ %
         {$5$} & 6 & $0.892\scriptscriptstyle\pm 0.057$ & $0.929\scriptscriptstyle\pm 0.025$ & $0.928\scriptscriptstyle\pm 0.026$ & $0.911\scriptscriptstyle\pm 0.045$ & $0.657\scriptscriptstyle\pm 0.116$ & $0.618\scriptscriptstyle\pm 0.079$ & $0.620\scriptscriptstyle\pm 0.025$ & $0.539\scriptscriptstyle\pm 0.078$ & $0.995\scriptscriptstyle\pm 0.009$ \\ 
         {$8$} & 9& $0.943\scriptscriptstyle\pm 0.006$ & $0.940\scriptscriptstyle\pm 0.008$ & $0.941\scriptscriptstyle\pm 0.009$ & $0.867\scriptscriptstyle\pm 0.039$ & $0.585\scriptscriptstyle\pm 0.144$ & $0.479\scriptscriptstyle\pm 0.129$ & $0.523\scriptscriptstyle\pm 0.016$ & $0.414\scriptscriptstyle\pm 0.031$ & $0.977\scriptscriptstyle\pm 0.011$ \\ %
         {$10$} & 11 & $0.939\scriptscriptstyle\pm 0.011$ & $0.915\scriptscriptstyle\pm 0.023$ & $0.925\scriptscriptstyle\pm 0.017$ & $0.924\scriptscriptstyle\pm 0.042$ & $0.708\scriptscriptstyle\pm 0.042$ & $0.611\scriptscriptstyle\pm 0.043$ & $0.604\scriptscriptstyle\pm 0.063$ & $0.525\scriptscriptstyle\pm 0.097$ & $0.996\scriptscriptstyle\pm 0.006$ \\ %
        \bottomrule
    \end{tabular}
     \end{minipage}}
    \end{sc}
    \end{small}
    \end{center}
    \vskip -0.1in
\end{table}

\paragraph{Results.}
From our ablations, it is prevalent that when the control signal \ut has a non-zero mean, it makes the learning problem easier; this holds for both known ($\gls{controlmat}=\Id{}$) and unknown ($\gls{controlmat}\neq\Id{}$) intervention targets or whether we directly observe the state \xt ($\gls{outmat}=\Id{}$) or not ($\gls{outmat}\neq\Id{}$) (all but the rightmost column in~\cref{tab:results1}). These \glspl{mcc} are also comparable to the best \glspl{mcc} in  \citep{ahuja_interventional_2022, kivva2022identifiability,willetts_i_2021}. 
We also report the \gls{mcc} when the environment variability matrix $\Del$ is well-conditioned, since it directly affects how diverse the environments are (the rightmost column in~\cref{tab:results1}). A better conditioned $\Del$ (with a condition number of one) yields higher \glspl{mcc}.
This suggests that when the environments are more diverse (quantified by the condition number of $\gls{envvarmat}$), we get better identifiability. Thus, we recommend practitioners that---while considering any constraints in the physical system---they should strive to design experiments with a $\Del$ matrix with the lowest possible condition number (\cf also \cref{app:subsec_choosing_sigma}).

\section{Related work}
\label{sec:related}

\paragraph{\gls{lti} systems.}
\gls{lti} systems are widely used in machine learning and science, e.g., \citep{grewal2010applications, schiff2009kalman, athans1974importance, mesot2007switching,kalman1960new}, since they are convenient to model temporal systems.
Learning the system parameters (system identification) has a vast literature so we do not attempt to summarize them here, see \citep{aastrom1971system, ljung1998system, ljung2010perspectives, galrinho2016least} and references therein.
Recent works on \gls{lti} systems include studying polynomial-complexity (in both time and samples) algorithms for system identification \citep{bakshi2023new, dean2020sample, simchowitz2019learning}, prediction and estimation through the no-regret learning framework
\citep{sarkar2019near, hardt2016gradient, simchowitz2018learning} and learning mixtures of such systems
\citep{chen2022learning, bakshi2023tensor}.

\paragraph{Nonlinear ICA.}
Independent Component Analysis \citep{comon1994independent,hyvarinen_independent_2000} comprises statistical methods to identify latent variables and
is now a fundamental primitive in \gls{sem}.
Identifiability is impossible in the nonlinear case without specific assumptions~\citep{darmois1951analyse,hyvarinen_nonlinear_1999}; even the linear case requires non-Gaussian source (latent) variables. Recent works on nonlinear ICA incorporate auxiliary variables \citep{hyvarinen_nonlinear_2019, gresele_incomplete_2019, khemakhem_variational_2020, halva_disentangling_2021, buchholz2023learning, concepts2024}, exploit temporal structure in the data~\citep{hyvarinen_nonlinear_2017,hyvarinen_unsupervised_2016,halva_hidden_2020,morioka_independent_2021,monti_causal_2020,hyvarinen_estimation_2010,klindt_towards_2021,zimmermann_contrastive_2021}, or restrict the model class~\citep{shimizu_linear_2006,hoyer_nonlinear_2008,zhang_identifiability_2012,gresele_independent_2021, kivva2022identifiability}.
Several works related nonlinear \gls{ica} to \gls{sem} estimation~\citep{gresele_independent_2021,monti_causal_2020,shimizu_linear_2006,von_kugelgen_self-supervised_2021,hyvarinen_identifiability_2023,reizinger_jacobian-based_2023} by inverting the data generating process---\ie, estimating the inverse functional assignment with an inference model.
Another approach towards identifiability is to assume access to multiple environments, where either the distributions or some property of the model class changes~\citep{gresele_incomplete_2019}. Our work is similar to the latter: we design interventions in multiple environments to aid identifiability.

\paragraph{Interventional and temporal models.}
Recent works studied identifiability under interventional data, e.g., \citep{brehmer2022weakly, ahuja2022weakly, ahuja_interventional_2022, lachapelle2022disentanglement, seigal2022linear, buchholz2023learning,zhang2023identifiability, jiang2023learning, liang2023causal, concepts2024, von2023nonparametric}.
These works assume intervening on exactly one variable, or require paired counterfactual data---our result does not require such assumptions.
Perhaps the most closely related works are CITRIS \citep{lippe_citris_2022} and its variants \citep{lippe_icitris_2022, lippe_intervention_2022}, TDRL \citep{yao_temporally_2022} and LEAP \citep{yao_learning_2021}. These works consider representation learning from temporal data; however, there are differences:
\eg, CITRIS considers interventions
that are changing as a function of time, such as the sequence of frames in a video. Moreover, they assume that the intervention targets are known a priori. Due to such differences, neither of these results nor their methodology directly translates to our setting.

\paragraph{Multienvironmental \gls{cd}.}
There are multiple works investigating \gls{cd} from multiple environments. \citet{peters_causal_2015} consider linear \glspl{sem}, where the marginal variances of the effect variables are the same across environments;  \citet{ghassami_learning_2017}, on the other hand, assume the same weights across environments and also require that the ratio of the cause and effect variances are different. In a follow-up work, \citet{ghassami_multi-domain_2018} propose a linear regression based \gls{cd} method for linear \glspl{sem} with independence tests.  \citet{wang_direct_2018} study the case when the linear SEM weights are different across environments, but either the cause or the effect marginal variances are the same in a pair of environments.
\citet{perry_causal_2022} investigate bivariate \gls{cd} and relax the \acrshort{iid} assumption to sparse distribution shifts across environments, \ie, only a subset and not all conditionals change in each environment.
Our multienvironmental identifiability and causal discovery result resembles to some extent the causal identifiability result in multimodal \gls{cl}~\citep{morioka_connectivity-contrastive_2023}, which can be thought of as a causal and multimodal generalization of \gls{tcl}~\citep{hyvarinen_unsupervised_2016}.

\section{Discussion}
\label{sec:discussion}

\paragraph{Limitations.}
Our results concern an important and widely used model class; however, the assumptions of linearity and time-invariance may be restrictive in some applications, and one may need to constrain the control signal, \eg, for safety reasons---we leave this for future work. While our theory is general enough to handle noise via a denoising argument, our experimental log-likelihood formulation
assumes noiselessness as is common in the \gls{ica} literature~\citep{hyvarinen_independent_2000,gresele_independent_2021}---this can be a limiting factor in practice where measurement noise can be arbitrary, though our preliminary experiments suggest some robustness even when observations are noisy (\cf \cref{tab:results_noisy} in \cref{subsec:app_robust}).
Another technical aspect is that to model \gls{statemat}, \gls{controlmat}, and \gls{outmat}, we parametrize the linear map by a single matrix; however, this makes extracting the model parameters non-trivial.
Future work could relax such assumptions.

\paragraph{Extensions to related works.}
We extend the ideas of \citet{reizinger_jacobian-based_2023}.
In the context of causal inference, they show that identifiability via ICA also yields the underlying \gls{dag}, \ie, nonlinear \gls{ica} can be used for \gls{cd} for causal data generating processes, generalizing results for linear models~\citep{shimizu_linear_2006}.
Furthermore, they discuss how contrastive nonlinear \gls{ica}~\citep{zimmermann_contrastive_2021} can be seen as a practical realization of the Causal de Finetti theorem~\citep{guo_causal_2022}; however, it remained an open question whether other \gls{ica} algorithms can be seen as such. By showing that a linear \gls{ica} method with sufficient environment variability (akin to \gls{tcl}~\citep{hyvarinen_unsupervised_2016}) can identify dynamical and thus causal systems, our work strengthens this connection between the \gls{ica} and causal perspectives. In other words, we show that assumptions derived from the \gls{ica} literature can be used to design interventions (control signals for the experiments), thereby conceptually introducing agency into the framework. Interestingly, our result does not prescribe the number of state variables that need to be intervened on, it only requires sufficient variability of the control signal. Furthermore, we show that \glspl{hmm} naturally satisfy the conditions for the \gls{cdf} theorem~\citep{guo_causal_2022}, showing a potential reason why system identification methods can succeed in this model class. Since \glspl{hmm} can have continuous parameters, our contribution corroborates the conjecture of \citet{guo_causal_2022} about the existence (at least in a restricted model class) of a \gls{cdf} theorem for continuous \gls{cdf} parameters.

\paragraph{Conclusion.}
In this work, we apply advances in the causality literature towards the practical application of LTI systems.
While identifiability in Gaussian \gls{lti} systems has a long history in control theory, our work provides a different means of achieving it via an interventional and multi-environmental perspective.
We show that with a precise environment variability condition on the control (intervention) signal,
a Gaussian \gls{lti} system is identifiable in the multi-environment case---\ie, it does not require white noise, which can be problematic for physical systems.
This can be interpreted as an equivalence of
the passive \acrfull{ica} perspective of identifiability, \ie, learning from a provided data set, to the agency-based (interventional) identifiability notion of \acrfull{crl}. Finally, we connect \acrfullpl{hmm} to an extension of the \gls{cdf} theorem~\citep{guo_causal_2022} with continuous parameters, providing a potential reason for why system identification is possible in \glspl{hmm}. 
\acks{The authors thank Siyuan Guo for her insights regarding the Causal de Finetti connection, Felix Leeb for discussing the practical implications of this paper, Zsolt Kollár and András Retzler for fruitful discussions on system identification, and Sara Magliacane for suggesting improvements for the experiments.
    Goutham Rajendran and Pradeep Ravikumar acknowledge the support of AFRL and DARPA via FA8750-23-2-1015, ONR via N00014-23-1-2368, NSF via IIS-1909816, IIS-1955532, and also acknowledge the support of JP Morgan Chase AI..
    Wieland Brendel acknowledges financial support via an Emmy Noether Grant funded by the German Research Foundation (DFG) under grant no. BR 6382/1-1.
    Wieland Brendel is a member of the Machine Learning Cluster of Excellence, EXC number 2064/1 – Project number 390727645. Patrik Reizinger thanks the International Max Planck Research School for Intelligent Systems (IMPRS-IS) for support and acknowledges his membership in the European Laboratory for Learning and Intelligent Systems (ELLIS) PhD program.
    We finally thank anonymous reviewers for useful comments.
    }

\bibliography{clear2024,local_bib.bib}

\clearpage

\appendix

\section{Linear Time Invariant systems}
\label{sec:app_lti}

In this section, we review some standard concepts about LTI systems. The notions of controllability and identifiability were introduced by Kalman \citep{kalman1960general} and it is now widely accepted that they govern when an LTI system can be learnt.

\subsection{Controllability}
\label{subsec:app_ctrb}

\begin{definition}[Controllability matrix]\label{def:controllability}
    The controllability matrix
    $\gls{controllability}\in\rr{\gls{statedim}\times (\gls{statedim}\cdot\gls{controldim})}$ for Defn. \ref{def:lti} is defined as
    \begin{align}
        \gls{controllability} &=  \brackets{\gls{controlmat}; \gls{statemat}\gls{controlmat}; \dots; \gls{statemat}^{\gls{statedim}-1}\gls{controlmat} },
    \end{align}
\end{definition}

Because of our system dynamics,
the controllability matrix intuitively captures the state space that can be reached eventually.

\begin{lemma}
    The similarity transformation $\mat{P}\gls{statemat}\inv{\mat{P}}, \mat{P}\gls{controlmat}$ does not change the rank of \gls{controllability}.
\end{lemma}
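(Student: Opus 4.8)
The plan is to show directly that the controllability matrix transforms by a left-multiplication by $\mat{P}$, i.e.\ $\tilde{\gls{controllability}} = \mat{P}\gls{controllability}$, where $\tilde{\gls{controllability}}$ is the controllability matrix of the transformed pair $\tilde{\gls{statemat}} = \mat{P}\gls{statemat}\inv{\mat{P}}$, $\tilde{\gls{controlmat}} = \mat{P}\gls{controlmat}$; since $\mat{P}$ is invertible this immediately preserves rank.

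First I would record the telescoping identity $\tilde{\gls{statemat}}^{k} = \mat{P}\gls{statemat}^{k}\inv{\mat{P}}$ for every $k \ge 0$, which follows by a one-line induction: the inner $\inv{\mat{P}}\mat{P}$ pairs cancel when multiplying $\mat{P}\gls{statemat}^{k}\inv{\mat{P}}$ by $\mat{P}\gls{statemat}\inv{\mat{P}}$. Consequently each block of $\tilde{\gls{controllability}}$ satisfies $\tilde{\gls{statemat}}^{k}\tilde{\gls{controlmat}} = \mat{P}\gls{statemat}^{k}\inv{\mat{P}}\mat{P}\gls{controlmat} = \mat{P}\gls{statemat}^{k}\gls{controlmat}$. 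Assembling the blocks (using Defn.~\ref{def:controllability}) gives $\tilde{\gls{controllability}} = [\tilde{\gls{controlmat}}; \tilde{\gls{statemat}}\tilde{\gls{controlmat}}; \dots; \tilde{\gls{statemat}}^{\gls{statedim}-1}\tilde{\gls{controlmat}}] = \mat{P}[\gls{controlmat}; \gls{statemat}\gls{controlmat}; \dots; \gls{statemat}^{\gls{statedim}-1}\gls{controlmat}] = \mat{P}\gls{controllability}$.

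Finally, since a similarity transformation requires $\mat{P}$ to be an invertible $\gls{statedim}\times\gls{statedim}$ matrix, left-multiplication by $\mat{P}$ is a linear bijection, so it preserves the dimension of the column space: $\rank{\tilde{\gls{controllability}}} = \rank{\mat{P}\gls{controllability}} = \rank{\gls{controllability}}$. The analogous statement for the observability matrix would follow by the dual argument (right-multiplication by $\inv{\mat{P}}$).

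There is essentially no real obstacle here; the only points that need a word of care are the $\inv{\mat{P}}\mat{P}$ cancellation in consecutive blocks and the (definitional) fact that $\mat{P}$ is invertible, which is what makes the rank argument go through.
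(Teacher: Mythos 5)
Your proposal is correct and follows essentially the same route as the paper: both establish $\brackets{\mat{P}\gls{statemat}\inv{\mat{P}}}^{k}=\mat{P}\gls{statemat}^{k}\inv{\mat{P}}$, observe that each block of the transformed controllability matrix reduces to $\mat{P}\gls{statemat}^{k}\gls{controlmat}$, and conclude $\tilde{\gls{controllability}}=\mat{P}\gls{controllability}$ so that invertibility of $\mat{P}$ preserves the rank. No gaps; your explicit induction and the remark on the dual observability argument are just slightly more detailed than the paper's one-line version.
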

\begin{proof}
    Since $\brackets{\mat{P}\gls{statemat}\inv{\mat{P}}}^i=\mat{P}\gls{statemat}^i\inv{\mat{P}}$ and \mat{P} are full-rank, we have
    \begin{align}
        \gls{controllability}\parenthesis{\mat{P}\gls{statemat}\inv{\mat{P}}, \mat{P}\gls{controlmat}} &=  \brackets{\mat{P}\gls{controlmat}; \mat{P}\gls{statemat}\gls{controlmat}; \dots; \mat{P}\gls{statemat}^{\gls{statedim}-1}\gls{controlmat} } = \mat{P}\gls{controllability}
    \end{align}
\end{proof}

\subsection{Observability}
\label{subsec:app_obsv}
\begin{definition}[Observability matrix]\label{def:observability}
    The observability matrix $\gls{observability}\in\rr{(\gls{statedim}\cdot \gls{outputdim})\times \gls{statedim}}$ for Defn. \ref{def:lti} is defined as
    \begin{align}
        \gls{observability} &=  \begin{pmatrix}
            \gls{outmat}\\
            \gls{outmat}\gls{statemat}\\
            \vdots\\
            \gls{outmat}\gls{statemat}^{\gls{statedim}-1}\\
        \end{pmatrix}
    \end{align}
\end{definition}
Similar to the controllability matrix, the observability matrix encapsulates the state space that we can observe eventually as the system evolves.

\begin{lemma}
    The similarity transformation $\mat{P}\gls{statemat}\inv{\mat{P}}, \gls{outmat}\inv{\mat{P}}$ does not change the rank of \gls{observability}.
\end{lemma}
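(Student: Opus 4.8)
The plan is to mirror the proof of the preceding controllability lemma, exploiting the duality between observability and controllability. First I would record the key algebraic identity $\brackets{\mat{P}\gls{statemat}\inv{\mat{P}}}^{i} = \mat{P}\gls{statemat}^{i}\inv{\mat{P}}$, valid for every integer $i \ge 0$; this follows by a one-line induction (or by telescoping the expanded product, where each adjacent pair $\inv{\mat{P}}\mat{P}$ collapses to the identity).

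Next I would compute the transformed observability matrix block row by block row. Its $i$-th block (for $i = 0, \dots, \gls{statedim}-1$) is
\begin{align*}
    \gls{outmat}\inv{\mat{P}}\brackets{\mat{P}\gls{statemat}\inv{\mat{P}}}^{i} = \gls{outmat}\inv{\mat{P}}\mat{P}\gls{statemat}^{i}\inv{\mat{P}} = \gls{outmat}\gls{statemat}^{i}\inv{\mat{P}}.
\end{align*}
Stacking these blocks vertically shows that the transformed observability matrix factorizes as $\gls{observability}\parenthesis{\mat{P}\gls{statemat}\inv{\mat{P}}, \gls{outmat}\inv{\mat{P}}} = \gls{observability}\inv{\mat{P}}$, i.e.\ the original observability matrix right-multiplied by the invertible matrix $\inv{\mat{P}}$.

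Finally, since $\inv{\mat{P}}$ is full rank, right-multiplication by it is a bijection on the row space and therefore preserves rank: $\rank{\gls{observability}\inv{\mat{P}}} = \rank{\gls{observability}}$, which is the claim. There is no genuine obstacle here; this is essentially the transpose-dual of the controllability argument. The only point requiring a moment's care is bookkeeping on which side $\inv{\mat{P}}$ ends up: it must appear on the right, acting on the state coordinates, which is exactly what makes the rank invariant, whereas in the controllability lemma the analogous factor $\mat{P}$ appeared on the left.
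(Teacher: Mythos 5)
Your proof is correct and follows exactly the same route as the paper's: apply the identity $\brackets{\mat{P}\gls{statemat}\inv{\mat{P}}}^{i}=\mat{P}\gls{statemat}^{i}\inv{\mat{P}}$ blockwise, factor the transformed observability matrix as $\gls{observability}\inv{\mat{P}}$, and conclude rank preservation from invertibility of $\mat{P}$. No issues.
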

\begin{proof}
    Since $\brackets{\mat{P}\gls{statemat}\inv{\mat{P}}}^i=\mat{P}\gls{statemat}^i\inv{\mat{P}}$ and \mat{P} are full-rank, we have
    \begin{align}
        \gls{observability}\parenthesis{\mat{P}\gls{statemat}\inv{\mat{P}}, \gls{outmat}\inv{\mat{P}}} &=  \begin{pmatrix}
            \gls{outmat}\inv{\mat{P}}\\
            \gls{outmat}\gls{statemat}\inv{\mat{P}}\\
            \vdots\\
            \gls{outmat}\gls{statemat}^{\gls{statedim}-1}\inv{\mat{P}}\\
        \end{pmatrix} =  \gls{observability}\inv{\mat{P}}
    \end{align}
\end{proof}

\subsection{Transformations of LTI systems}
\label{subsec:app_transform}

\begin{definition}[Identifiability up to similarity transformations]\label{definition:ident_similarity}
    The \gls{lti} system parameters $\gls{statemat}, \gls{controlmat}, \gls{outmat}$ are identifiable up to similarity transformations if for any other system parameters $\gls{statemat}', \gls{controlmat}', \gls{outmat}'$ that fit the \gls{lti} system, there exists a full rank matrix $\mP \in \RR^{\dx \times \dx}$ such that
    \begin{align}
    \gls{statemat}' = \mP\gls{statemat}\mP^{-1}, \quad \gls{controlmat}' = \mP\gls{controlmat}, \quad \gls{outmat}' = \gls{outmat}\mP^{-1}
\end{align}
\end{definition}

\begin{lemma}[Equivalence class of \gls{lti} systems]\label{lem:eq_lti}
    Coordinate transformations of the state \gls{state} by a full rank matrix \mat{P}, \ie, $\gls{state}'=\mat{P}\gls{state},$ with the corresponding transformations of the systems matrices:
    \begin{align*}
        \gls{statemat}' &= \mat{P}\gls{statemat}\inv{\mat{P}}\\
        \gls{controlmat}' &= \mat{P}\gls{controlmat}\\
        \gls{outmat}' &= \gls{outmat}\inv{\mat{P}},\\
    \end{align*}
    yield an equivalent transfer function.
\end{lemma}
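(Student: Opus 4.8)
The plan is a direct substitution into the transfer function. Writing $\tfz = \gls{outmat}\parenthesis{\gls{complexdiscrfreq}\mI - \gls{statemat}}^{-1}\gls{controlmat}$, I want to show that the transformed triple $\parenthesis{\gls{statemat}', \gls{controlmat}', \gls{outmat}'} = \parenthesis{\mat{P}\gls{statemat}\inv{\mat{P}},\, \mat{P}\gls{controlmat},\, \gls{outmat}\inv{\mat{P}}}$ produces the same $\tfz$. The one observation that makes this work is that $\gls{complexdiscrfreq}\mI$ is a scalar multiple of the identity and hence commutes with $\mat{P}$, so that
\begin{align*}
    \gls{complexdiscrfreq}\mI - \gls{statemat}' = \mat{P}\parenthesis{\gls{complexdiscrfreq}\mI}\inv{\mat{P}} - \mat{P}\gls{statemat}\inv{\mat{P}} = \mat{P}\parenthesis{\gls{complexdiscrfreq}\mI - \gls{statemat}}\inv{\mat{P}}.
\end{align*}

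Since $\gls{statemat}$ and $\gls{statemat}' = \mat{P}\gls{statemat}\inv{\mat{P}}$ are similar, they have the same spectrum, so $\gls{complexdiscrfreq}\mI - \gls{statemat}'$ is invertible exactly on the same set of $\gls{complexdiscrfreq}$ (the resolvent set, i.e. the natural domain of the transfer function), and there $\parenthesis{\gls{complexdiscrfreq}\mI - \gls{statemat}'}^{-1} = \mat{P}\parenthesis{\gls{complexdiscrfreq}\mI - \gls{statemat}}^{-1}\inv{\mat{P}}$. Substituting this together with $\gls{outmat}' = \gls{outmat}\inv{\mat{P}}$ and $\gls{controlmat}' = \mat{P}\gls{controlmat}$, the adjacent factors $\inv{\mat{P}}\mat{P}$ telescope to the identity and one is left with $\gls{outmat}\parenthesis{\gls{complexdiscrfreq}\mI - \gls{statemat}}^{-1}\gls{controlmat} = \tfz$. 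That is the entire computation.

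For completeness I would also remark that the primed triple really is a realization of the same input–output behaviour: substituting $\gls{state}' = \mat{P}\gls{state}$ into \eqref{eq:lti_state} yields the dynamics with matrices $\gls{statemat}', \gls{controlmat}', \gls{outmat}'$, and the two lemmas earlier in this subsection show that $\mat{P}$ preserves the ranks of \gls{controllability} and \gls{observability}; hence the similarity orbit stays within the controllable–observable class and the equivalence class of Def.~\ref{definition:ident_similarity} is well defined. There is no real obstacle here — the statement is a one-line linear-algebra identity; the only points deserving a moment's care are the commutativity of the scalar matrix $\gls{complexdiscrfreq}\mI$ with $\mat{P}$ (which lets the conjugation pass through the resolvent) and the observation that similar matrices are isospectral (so the domain of $\tfz$ is unchanged).
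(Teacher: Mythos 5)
Your proof is correct and follows essentially the same route as the paper's: substitute the transformed matrices into the transfer function, pull $\mat{P}$ and $\inv{\mat{P}}$ through the resolvent via $\gls{complexdiscrfreq}\Id{} - \gls{statemat}' = \mat{P}\parenthesis{\gls{complexdiscrfreq}\Id{} - \gls{statemat}}\inv{\mat{P}}$, and cancel the adjacent $\inv{\mat{P}}\mat{P}$ factors. The additional remarks on isospectrality and preservation of controllability/observability are fine but not needed for the statement.
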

\begin{proof}
    We start by the definition of the transfer function for the original and transformed systems:
    \begin{align}
         \tfz &= \gls{outmat}\inv{\brackets{\gls{complexdiscrfreq}\Id{\gls{statedim}}-\gls{statemat}}}\gls{controlmat}\\
         \gls{tf}'\parenthesis{\gls{complexdiscrfreq}} &= \gls{outmat}'\inv{\brackets{\gls{complexdiscrfreq}\Id{\gls{statedim}}-\gls{statemat}'}}\gls{controlmat}'\\
         \intertext{We substitute the transformed matrices into $\gls{tf}'\parenthesis{\gls{complexdiscrfreq}}$ from the definition to get}
          \gls{tf}'\parenthesis{\gls{complexdiscrfreq}} &= \gls{outmat}\inv{\mat{P}}\inv{\brackets{\gls{complexdiscrfreq}\Id{\gls{statedim}}-\mat{P}\gls{statemat}\inv{\mat{P}}}}\mat{P}\gls{controlmat}\\
         \intertext{Then we rewrite $\Id{\gls{statedim}} = \mat{P}\inv{\mat{P}}$}
         &= \gls{outmat}\inv{\mat{P}}\inv{\brackets{\gls{complexdiscrfreq}\mat{P}\inv{\mat{P}}-\mat{P}\gls{statemat}\inv{\mat{P}}}}\mat{P}\gls{controlmat}\\
         &= \gls{outmat}\inv{\mat{P}}\inv{\brackets{\mat{P}\parenthesis{\gls{complexdiscrfreq}\Id{\gls{statedim}}-\gls{statemat}}\inv{\mat{P}}}}\mat{P}\gls{controlmat}\\
         &= \gls{outmat}\inv{\mat{P}}\mat{P}\inv{\brackets{\gls{complexdiscrfreq}\Id{\gls{statedim}}-\gls{statemat}}}\inv{\mat{P}}\mat{P}\gls{controlmat} = \tfz
    \end{align}
\end{proof}

\begin{remark}[Reasonable identifiability requirement for \gls{lti} systems]
    Lem. \ref{lem:eq_lti} implies by the invariance of the transfer function that a reasonable identifiability result for \gls{lti} systems is one up to linear transformations.
\end{remark}

\begin{lemma}[Equivalence of transfer functions \gls{lti} systems]\label{lem:eq_lti_transfer}
    If two transfer functions $ \tfz=\gls{tf}'\parenthesis{\gls{complexdiscrfreq}},$ then the state is given up to a change of coordinate frame, and the relationship between the matrices of the two state-space representations will relate as:
    \begin{align*}
        \gls{statemat}' &= \mat{P}\gls{statemat}\inv{\mat{P}}\\
        \gls{controlmat}' &= \mat{P}\gls{controlmat}\\
        \gls{outmat}' &= \gls{outmat}\inv{\mat{P}},\\
    \end{align*}
    where \mat{P} is a full rank matrix and $\gls{state}'=\mat{P}\gls{state}.$
\end{lemma}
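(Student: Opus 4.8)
The plan is to pass from the frequency domain to the time domain through the Markov parameters, and then to exploit that a controllable and observable realization is minimal, so that its controllability and observability matrices supply one-sided inverses which pin down the coordinate change. Throughout I assume, as in \cref{assum:lti}, that both $(\gls{statemat},\gls{controlmat},\gls{outmat})$ and $(\gls{statemat}',\gls{controlmat}',\gls{outmat}')$ are controllable and observable.

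First I would expand the transfer function at $\gls{complexdiscrfreq}=\infty$: for $\abs{\gls{complexdiscrfreq}}$ larger than the spectral radius of $\gls{statemat}$ one has $\inv{(\gls{complexdiscrfreq}\Id{\gls{statedim}}-\gls{statemat})}=\sum_{k\ge 0}\gls{statemat}^{k}\gls{complexdiscrfreq}^{-(k+1)}$, hence $\tfz=\sum_{k\ge 0}\gls{outmat}\gls{statemat}^{k}\gls{controlmat}\,\gls{complexdiscrfreq}^{-(k+1)}$, and similarly for the primed system. Equality of the two rational matrix functions forces equality of all Laurent coefficients, i.e. the Markov parameters agree: $\gls{outmat}\gls{statemat}^{k}\gls{controlmat}=\gls{outmat}'(\gls{statemat}')^{k}\gls{controlmat}'$ for all $k\ge 0$. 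Consequently the two $(\gls{statedim},\gls{statedim})$-block Hankel matrices (with $(i,j)$ block $\gls{outmat}\gls{statemat}^{i+j-2}\gls{controlmat}$) coincide and factor as $\gls{observability}\gls{controllability}=\gls{observability}'\gls{controllability}'$, while the once-shifted Hankel matrix factors as $\gls{observability}\gls{statemat}\gls{controllability}=\gls{observability}'\gls{statemat}'\gls{controllability}'$. If the two realizations had different state dimensions, minimality would force both to equal the rank of the Hankel matrix, so we may take them both equal to $\gls{statedim}$.

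Next I would build the coordinate change. By controllability and observability, $\gls{controllability},\gls{controllability}'$ have full row rank $\gls{statedim}$ and $\gls{observability},\gls{observability}'$ have full column rank $\gls{statedim}$, so $\gls{controllability}\pinv{\gls{controllability}}=\Id{\gls{statedim}}$ and $\pinv{(\gls{observability}')}\gls{observability}'=\Id{\gls{statedim}}$. Put $\mat{P}:=\pinv{(\gls{observability}')}\gls{observability}$. Left-multiplying $\gls{observability}\gls{controllability}=\gls{observability}'\gls{controllability}'$ by $\pinv{(\gls{observability}')}$ gives $\gls{controllability}'=\mat{P}\gls{controllability}$; right-multiplying the same identity by $\pinv{\gls{controllability}}$ and using this relation then gives $\gls{observability}=\gls{observability}'\mat{P}$. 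Running the same construction with the systems swapped yields $\mat{P}':=\pinv{\gls{observability}}\gls{observability}'$ with $\gls{controllability}=\mat{P}'\gls{controllability}'$, and combining these relations with the full row rank of $\gls{controllability},\gls{controllability}'$ gives $\mat{P}\mat{P}'=\mat{P}'\mat{P}=\Id{\gls{statedim}}$; thus $\mat{P}$ is invertible with $\inv{\mat{P}}=\mat{P}'$.

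Finally I would read off the three identities: comparing the first blocks of $\gls{controllability}'=\mat{P}\gls{controllability}$ gives $\gls{controlmat}'=\mat{P}\gls{controlmat}$; comparing the first blocks of $\gls{observability}'=\gls{observability}\inv{\mat{P}}$ gives $\gls{outmat}'=\gls{outmat}\inv{\mat{P}}$; and substituting $\gls{observability}=\gls{observability}'\mat{P}$ and $\gls{controllability}=\inv{\mat{P}}\gls{controllability}'$ into $\gls{observability}\gls{statemat}\gls{controllability}=\gls{observability}'\gls{statemat}'\gls{controllability}'$ and cancelling the one-sided-invertible factors $\gls{observability}'$ and $\gls{controllability}'$ gives $\gls{statemat}'=\mat{P}\gls{statemat}\inv{\mat{P}}$. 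These are precisely the transformations produced by the coordinate change $\gls{state}'=\mat{P}\gls{state}$ (cf.\ Lem.~\ref{lem:eq_lti}), which finishes the proof. The one genuine obstacle is the well-definedness and invertibility of $\mat{P}$: that is exactly where minimality of \emph{both} realizations is essential --- it forces the two state dimensions to agree and furnishes the left/right inverses used above --- after which the remaining steps are routine block-matrix bookkeeping.
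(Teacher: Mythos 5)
Your proof is correct, and it takes a genuinely different---and more rigorous---route than the paper's. The paper's proof begins from the ansatz that each matrix of the second realization is obtained from the corresponding matrix of the first by fixed left and right multiplications, $\mat{M}' = \mat{P}_l\mat{M}\mat{P}_r$, and then matches the two transfer functions symbolically to force those factors to collapse into a single similarity transformation; the only justification offered for restricting to that parametric family in the first place is the remark that for \gls{lti} systems ``indeterminacies are only possible up to linear transformations.'' You avoid any such ansatz: you recover the Markov parameters from the Laurent expansion of \tfz at infinity, factor the block Hankel matrix and its shift as $\gls{observability}\gls{controllability}$ and $\gls{observability}\gls{statemat}\gls{controllability}$ (which, with the paper's Defns.~\ref{def:controllability} and \ref{def:observability}, is exactly the Hankel matrix of the paper with $(i,j)$ block $\gls{outmat}\gls{statemat}^{i+j-2}\gls{controlmat}$), and construct $\mat{P}=\pinv{(\gls{observability}')}\gls{observability}$ explicitly from the one-sided inverses that minimality supplies. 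What your approach buys is a genuine proof that \emph{any} two controllable and observable realizations of the same transfer function are similar, with no prior assumption on how their matrices are related, plus a constructive formula for $\mat{P}$ that dovetails with the Ho--K\'alm\'an machinery the paper invokes in \cref{thm:ident_system_lti}; what it costs is that you must explicitly lean on controllability and observability from \cref{assum:lti}, which the paper's version of this lemma never actually uses. The one point worth spelling out is that the truncated controllability and observability matrices (with $\gls{statedim}$ block columns/rows) already attain full rank by Cayley--Hamilton, so the finite Hankel factorization you use suffices; with that noted, your block-matrix bookkeeping at the end is exactly right and reproduces the transformations of Lem.~\ref{lem:eq_lti}.
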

\begin{proof}
    Note that since we consider \acrshort{lti} systems, indeterminacies are only possible up to linear transformations. We assume that though $ \tfz=\gls{tf}'\parenthesis{\gls{complexdiscrfreq}}$ the matrices are transformed in the most general way, \ie $\forall \mat{M} \leftarrow \mat{P}_l\mat{M}\mat{P}_r$

    We start by the definition of the transfer function for the original and transformed systems:
    \begin{align}
         \tfz &= \gls{outmat}\inv{\brackets{\gls{complexdiscrfreq}\Id{\gls{statedim}}-\gls{statemat}}}\gls{controlmat}\\
         \gls{tf}'\parenthesis{\gls{complexdiscrfreq}} &= \gls{outmat}'\inv{\brackets{\gls{complexdiscrfreq}\Id{\gls{statedim}}-\gls{statemat}'}}\gls{controlmat}'\\
         \intertext{We substitute the transformed matrices into $\gls{tf}'\parenthesis{\gls{complexdiscrfreq}}$ from the definition to get}
          \gls{tf}'\parenthesis{\gls{complexdiscrfreq}} &= \mat{P}^C_l\gls{outmat}\mat{P}^C_r\inv{\brackets{\gls{complexdiscrfreq}\Id{\gls{statedim}}-\mat{P}^A_l\gls{statemat}\mat{P}^A_r}}\mat{P}_l^B\gls{controlmat}\mat{P}_r^B\\
         &= \mat{P}^C_l\gls{outmat}\mat{P}^C_r\inv{\brackets{\mat{P}^A_l\parenthesis{\gls{complexdiscrfreq}\inv{(\mat{P}^A_l)}\inv{(\mat{P}^A_r)}-\mat{P}^A_l\gls{statemat}}\mat{P}^A_r}}\mat{P}_l^B\gls{controlmat}\mat{P}_r^B\\
         &= \mat{P}^C_l\gls{outmat}\mat{P}^C_r\inv{(\mat{P}^A_r)}\inv{\brackets{\gls{complexdiscrfreq}\inv{(\mat{P}^A_l)}\inv{(\mat{P}^A_r)}-\gls{statemat}}}\inv{(\mat{P}^A_l)}\mat{P}_l^B\gls{controlmat}\mat{P}_r^B\\
         \intertext{For this expression to equal to $\tfz,$  it is it necessary to have $\mat{P}^C_l=\mat{P}^B_r=\Id{}$, yielding}
          &= \gls{outmat}\mat{P}^C_r\inv{(\mat{P}^A_r)}\inv{\brackets{\gls{complexdiscrfreq}\inv{(\mat{P}^A_l)}\inv{(\mat{P}^A_r)}-\gls{statemat}}}\inv{(\mat{P}^A_l)}\mat{P}_l^B\gls{controlmat}
    \end{align}
    Then left multiplying with the inverse of $C$, and doing the same from the right with that of $B$, we need to have $(\gls{complexdiscrfreq}\Id{}-A)^{-1}, $ which requires that $\mat{P}^C_r\inv{(\mat{P}^A_r)}= \Id{}$, $\inv{(\mat{P}^A_l)}\mat{P}_l^B=\Id{},$ and $\inv{(\mat{P}^A_l)}\inv{(\mat{P}^A_r)}=\Id{},$ which yields the coordinate transforms from the lemma.
\end{proof}

\section{Proofs}

\subsection{Proof of Theorem \ref{thm:ident_lti}}
\label{app:theory}

Instate the setting of the Gaussian LTI system as described in \cref{sec:theory}. In this section, we prove our main identifiability result \cref{thm:ident_lti}. We start with the following lemma.
\begin{lemma}
    \label{lem:expansion}
    For each environment $e$, we have
    \begin{align}
        \yt^e = \gls{outmat}\gls{statemat}^{t}\bx_0^e +  \sum_{i = 1}^t \gls{outmat}\gls{statemat}^{i - 1}\gls{controlmat}\bu_{t - i}^e + \beps_t^e
    \end{align}
\end{lemma}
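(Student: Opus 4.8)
The plan is to first establish the closed-form solution for the hidden state $\bx_t^e$ by unrolling the state recursion in \eqref{eq:lti} by induction on $t$, and then to obtain the claimed expression for $\by_t^e$ by substituting into the output equation $\by_t^e = \gls{outmat}\bx_t^e + \beps_t^e$. Recall that, per the setting of \cref{subsec:setting}, the process noise has been absorbed into the control signal $\bu_t^e$, so we may work with the deterministic state update $\bx_{t+1}^e = \gls{statemat}\bx_t^e + \gls{controlmat}\bu_t^e$.

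Concretely, I would prove by induction on $t$ that
\[
    \bx_t^e = \gls{statemat}^{t}\bx_0^e + \sum_{i=1}^{t}\gls{statemat}^{i-1}\gls{controlmat}\bu_{t-i}^e .
\]
The base case $t=0$ is immediate, since $\gls{statemat}^0 = \mI$ and the sum is empty. For the inductive step, assuming the identity holds at time $t$, I substitute it into $\bx_{t+1}^e = \gls{statemat}\bx_t^e + \gls{controlmat}\bu_t^e$ to get $\gls{statemat}^{t+1}\bx_0^e + \sum_{i=1}^{t}\gls{statemat}^{i}\gls{controlmat}\bu_{t-i}^e + \gls{controlmat}\bu_t^e$. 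Reindexing the sum via $j=i+1$ turns $\sum_{i=1}^{t}\gls{statemat}^{i}\gls{controlmat}\bu_{t-i}^e$ into $\sum_{j=2}^{t+1}\gls{statemat}^{j-1}\gls{controlmat}\bu_{(t+1)-j}^e$, while the leftover term satisfies $\gls{controlmat}\bu_t^e = \gls{statemat}^{0}\gls{controlmat}\bu_{(t+1)-1}^e$, which is exactly the $j=1$ summand; combining the two gives $\bx_{t+1}^e = \gls{statemat}^{t+1}\bx_0^e + \sum_{j=1}^{t+1}\gls{statemat}^{j-1}\gls{controlmat}\bu_{(t+1)-j}^e$, completing the induction.

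Finally, left-multiplying by $\gls{outmat}$ and adding $\beps_t^e$ yields $\by_t^e = \gls{outmat}\gls{statemat}^{t}\bx_0^e + \sum_{i=1}^{t}\gls{outmat}\gls{statemat}^{i-1}\gls{controlmat}\bu_{t-i}^e + \beps_t^e$, as claimed. There is no genuine obstacle here: the only points requiring care are the index shift in the inductive step and the bookkeeping convention that the process noise has been folded into $\bu_t^e$ so that the state recursion is treated as deterministic. This lemma then serves as the starting point for the rest of the argument, where the right-hand side is used to express each environment's observational density as a function of the control-signal variances.
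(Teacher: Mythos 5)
Your proof is correct and follows essentially the same route as the paper's: both unroll the state recursion of \eqref{eq:lti} down to $\bx_0^e$ and then apply the output equation, with your version merely packaging the unrolling as a formal induction on $t$ where the paper writes the repeated substitution directly. The index shift in your inductive step checks out, so there is nothing to add.
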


\begin{proof}
    For any fixed environment $e$, we simply repeatedly apply the LTI equations to get
    \begin{align}
        \yt^e &= \gls{outmat}\xt^e + \beps_t^e\\
        &= \gls{outmat}(\gls{statemat}\xt[t-1]^e + \gls{controlmat}\bu_{t - 1}^e) + \beps_t^e\\
        &= \gls{outmat}\gls{statemat}(\gls{statemat}\xt[t-2]^e + \gls{controlmat}\bu_{t - 2}^e) + \gls{outmat}\gls{controlmat}\bu_{t - 1}^e + \beps_t^e\\
        &= \ldots\\
        &= \gls{outmat}\gls{statemat}^{t}\bx_0^e + \sum_{i = 1}^t \gls{outmat}\gls{statemat}^{i - 1}\gls{controlmat}\bu_{t - i}^e + \beps_t^e
    \end{align}
\end{proof}

Before proceeding to the proof, we emphasize that by centering both \ut and \yt, we can assume, \wolog, that \ut and, (by the linearity of the \gls{lti} system and the linearity of the expectation operator) \yt are zero-mean. By the same argument, we can also set $\bx_0^e = \zeros{}$:

\begin{lemma}[Zero-mean signals and zero initial state]
\label{lem:centering}
    By the linearity of the system and the expectation operator, \wolog, \ut, \yt, can be assumed to have zero means, and we can also set $\bx_0 = \zeros{}$.
\end{lemma}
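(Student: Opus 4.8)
The plan is to reduce the general, mean-shifted model with an arbitrary initial state to the centered one by subtracting means, and to verify that none of the objects the identifiability argument actually manipulates---the matrices $\gls{statemat},\gls{controlmat},\gls{outmat}$, the control covariances $\gls{cov}^e_{\gls{control}}$, the environment variability matrix $\Del$, and the Markov parameter matrix $\mG$---is disturbed by this reduction.

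First I would take expectations in the per-environment dynamics \eqref{eq:lti}. Since the process and observation noise are zero-mean, the means satisfy the \emph{deterministic} recursion $\EE[\xt[t+1]^e]=\gls{statemat}\EE[\xt^e]+\gls{controlmat}\EE[\ut^e]$ and $\EE[\yt^e]=\gls{outmat}\EE[\xt^e]$. Subtracting this recursion from \eqref{eq:lti}, the centered signals $\xt^e-\EE[\xt^e]$, $\ut^e-\EE[\ut^e]$, $\yt^e-\EE[\yt^e]$ are seen to obey the \emph{same} LTI recursion, with the same system matrices and the same (still zero-mean) noise; and because centering leaves second moments unchanged, the centered control in environment $e$ still has covariance $\gls{cov}^e_{\gls{control}}$. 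Consequently $\Del$ is unchanged, and $\mG$, which depends on $\gls{statemat},\gls{controlmat},\gls{outmat}$ alone, is unchanged as well. As the trajectory is jointly Gaussian, its law is pinned down by its mean and covariance; the mean is a free nuisance that is estimated and removed separately (via the empirical mean in practice), so identifying $\mG$ from the centered data is equivalent to identifying it from the raw data.

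Next I would dispose of the initial state using Lem.~\ref{lem:expansion}, which gives $\yt^e=\gls{outmat}\gls{statemat}^{t}\bx_0^e+\sum_{i=1}^{t}\gls{outmat}\gls{statemat}^{i-1}\gls{controlmat}\bu_{t-i}^e+\beps_t^e$. A deterministic (fixed-per-environment) initial condition therefore enters the trajectory only through the additive term $\gls{outmat}\gls{statemat}^{t}\bx_0^e$, which is entirely part of $\EE[\yt^e]$ and is thus removed by the very same centering step; the centered output then has the same distribution as that of the same system driven by the centered controls and started from $\bx_0^e=\zeros{}$. (If one prefers to allow a \emph{random} $\bx_0^e$, stability of $\gls{statemat}$ makes its contribution to the trajectory covariance decay geometrically in $t$, so one may either discard the transient or, as is conventional here, treat $\bx_0^e$ as fixed.) Combining the two reductions yields the claim: $\wolog$ $\ut^e$ and $\yt^e$ are zero-mean and $\bx_0^e=\zeros{}$.

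The only point needing care---rather than a genuine obstacle---is the bookkeeping: centering all of $\bx,\bu,\by$ simultaneously so that the recursion is preserved exactly, and noting that this touches first moments only, leaving $\gls{cov}^e_{\gls{control}}$, $\Del$, and $\mG$ (the quantities the identifiability proof works with) intact. In a finite-sample treatment one would additionally observe that replacing true means by empirical means perturbs the reconstructed control by $\ordo{1/\sqrt{n}}$, which is immaterial for the population-level identifiability statement.
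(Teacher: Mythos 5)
Your proposal is correct and follows essentially the same route as the paper's proof: both exploit linearity (via the explicit expansion of Lem.~\ref{lem:expansion}) to observe that the control mean and the initial state enter only through the first moment of $\yt$, so centering by the (known or empirical) means reduces to the zero-mean, zero-initial-state case while leaving the system matrices and covariances untouched. Your additional bookkeeping that $\gls{cov}^e_{\gls{control}}$, $\Del$, and $\mG$ are preserved is a slightly more explicit version of what the paper leaves implicit, but it is the same argument.
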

\begin{proof}
    Lem.~\ref{lem:expansion} describes the map from each \ut[i] to \yt---this is due to each \ut[i] being an \acrshort{iid} Gaussian sample; thus, their sum is also Gaussian. Denote  $\mT_t = \sum_{i = 1}^t \gls{outmat}\gls{statemat}^{i - 1}\gls{controlmat}$. Since the Gaussian distribution is fully characterized by its mean \gls{mean} and covariance \gls{cov}, we investigate how these two quantities are related:
    \begin{align}
        \gls{mean}_{\gls{control}} &\mapsto \gls{mean}_{\gls{output},t}:= \mT_t\gls{mean}_{\gls{control}} + \gls{outmat}\gls{statemat}^{t}\bx_0 \\
        \gls{cov}_{\gls{control}} &\mapsto \gls{cov}_{\gls{output},t}:=\mT_t\gls{cov}_{\gls{control}}\transpose{\mT_t}.
    \end{align}
    When the initial state is non-zero, \ie, $\bx_0 \neq \zeros{}$. Since $\bx_0$ is a constant, it only affects the mean of \yt. For a given $t$ we can define $\hat{\gls{control}}_t := \ut -\gls{mean}_{\gls{control}}$ ($\gls{mean}_{\gls{control}}$ is independent from $t$ due to the \acrshort{iid} assumption) and $\hat{\gls{output}}_t := \yt -\gls{mean}_{\gls{output}, t}.$ By assuming that we know $\gls{mean}_{\gls{control}}$, we can calculate $\hat{\gls{control}}_t$, and we can use the empirical mean for calculating $\hat{\gls{output}}_t$. This means that instead of the original \gls{lti} system 
    we conceptually use a modified \gls{lti} system, which has a zero-mean input and a zero-mean output (\ie, the transformations for calculating $\hat{\gls{control}}_t, \hat{\gls{output}}_t$ are considered part of the system).
    This is without loss of generality, since by the linearity of the system and the expectation operator, and since $\gls{mean}_{\gls{control}}$ is known (\ie, it is a constant), we can always recover the mean of \yt by using a constant control signal \ut, measuring the (constant) output, and adding that to $\hat{\gls{output}}_t$.
\end{proof}

We restate our main theorem for convenience.

\Main*

\begin{proof}[Proof of \cref{thm:ident_lti}]
Consider the dataset of observations $(\yt^e)_{e \in E, t \le T}$ for a horizon $T$. Suppose there exist two sets of parameters $(\gls{statemat}, \gls{controlmat}, \gls{outmat})$ and $(\alt{\gls{statemat}}, \alt{\gls{controlmat}}, \alt{\gls{outmat}})$ that could have generated the dataset, we will now argue that they must be related by a similarity transformation.

Define $\mT_t = \sum_{i = 1}^t \gls{outmat}\gls{statemat}^{i - 1}\gls{controlmat}, \alt{\mT}_t = \sum_{i = 1}^t \alt{\gls{outmat}}\alt{\gls{statemat}}^{i - 1}\alt{\gls{controlmat}}$ and let $\gls{cov}_{\gls{control}}^e = \text{diag}((\sig_1^e)^2, 
\ldots, (\sig_{\du}^e)^2)$ be the diagonal matrix of variances. By assumption, we have $\bu^e_i \sim (\gls{cov}_{\gls{control}}^e)^{1/2} \normal{\zeros{}}{\Id{}}$ independently for all $i$. Also, by a standard denoising argument \citep{khemakhem_variational_2020, lachapelle2022disentanglement,kivva2022identifiability}, we can set $\beps_t^e = 0$ by deconvolving the noise operator. Assuming $\bx_0^e=\zeros{}$ via Lem. \ref{lem:centering}, using Lem.~\ref{lem:expansion}, we have
\begin{align}
    \yt^e = \sum_{i = 1}^t \gls{outmat}\gls{statemat}^{i - 1}\gls{controlmat}\bu_{t - i}^e + \beps_t^e \sim \mT_t (\gls{cov}_{\gls{control}}^e)^{1/2} \normal{\zeros{}}{\Id{}}
\end{align}
for all $e \in E, t \le T$.
Let the densities of $\yt^e$ be denoted $p_{\gls{statemat}, \gls{controlmat}, \gls{outmat}, e, t}(\by)$ and $p_{\alt{\gls{statemat}}, \alt{\gls{controlmat}}, \alt{\gls{outmat}}, e, t}(\by)$ respectively in these two models under environment $e$. Let $p_{\gls{normal}}(x; \gls{mean}, \sig^2)$ denote the density of the Gaussian distribution $\normal{\gls{mean}}{\sig^2}$ evaluated at point $x$, so that
\begin{align}
    \ln p_{\gls{normal}}(x; \gls{mean}, \sig^2) &=  -\frac{(x - \gls{mean})^2}{2\sig^2} - \ln \sig - \ln \sqrt{2\pi}
\end{align}
Now, using a standard change of variables,
\begin{align}
    \ln p_{\gls{statemat}, \gls{controlmat}, \gls{outmat}, e, t}(\by) &= \ln |\det \mT_t^{-1}| + \sum_{i \le \du}\ln p_{\gls{normal}}((\mT_t^{-1}\by)_i; 0, (\sig_i^e)^2)\\
    &= \ln |\det \mT_t^{-1}| - \sum_{i = 1}^{\du} \left(\frac{(\mT_t^{-1}\by)_i^2}{2(\sig_i^e)^2} + \ln \sig_i^e + \ln \sqrt{2\pi}\right)
\end{align}.

Similarly,
\begin{align}
    \ln p_{\alt{\gls{statemat}}, \alt{\gls{controlmat}}, \alt{\gls{outmat}}, e, t}(\by) = \ln |\det \alt{\mT}_t^{-1}| - \sum_{i = 1}^{\du} \left(\frac{(\alt{\mT}_t^{-1}\by)_i^2}{2(\sig_i^e)^2} + \ln \sig_i^e + \ln \sqrt{2\pi}\right)
\end{align}
Now we will consider the log-odds $q_{\gls{statemat}, \gls{controlmat}, \gls{outmat}, e, t}(\by) = \ln p_{\gls{statemat}, \gls{controlmat}, \gls{outmat}, e, t}(\by) - \ln p_{\gls{statemat}, \gls{controlmat}, \gls{outmat}, 0, t}(\by)$ of the $e$-th environment with respect to a fixed $0$-th environment.
Note that they match under the two models as per our assumptions. We use the analytic expression for the densities to obtain
\begin{align}
q_{\gls{statemat}, \gls{controlmat}, \gls{outmat}, e, t}(\by) &=  \ln p_{\gls{statemat}, \gls{controlmat}, \gls{outmat}, e, t}(\by) - \ln p_{\gls{statemat}, \gls{controlmat}, \gls{outmat}, 0, t}(\by)\\
&=\sum_{i = 1}^{\du} \left(-\frac{(\mT_t^{-1}\by)_i^2}{2} \left(\frac{1}{(\gls{std}^e_i)^2} - \frac{1}{(\gls{std}^0_i)^2}\right) - \ln \frac{\sig_i^e}{\sig_i^0}\right)
\end{align}
Analogously defining $q_{\alt{\gls{statemat}}, \alt{\gls{controlmat}}, \alt{\gls{outmat}}, e, t}(\by) = \ln p_{\alt{\gls{statemat}}, \alt{\gls{controlmat}}, \alt{\gls{outmat}}, e, t}(\by) - \ln p_{\alt{\gls{statemat}}, \alt{\gls{controlmat}}, \alt{\gls{outmat}}, 0, t}(\by)$ and using the same calculations, we get
\begin{align}
q_{\alt{\gls{statemat}}, \alt{\gls{controlmat}}, \alt{\gls{outmat}}, e, t}(\by) =\sum_{i = 1}^{\du} \left(-\frac{(\alt{\mT}_t^{-1}\by)_i^2}{2} \left(\frac{1}{(\gls{std}^e_i)^2} - \frac{1}{(\gls{std}^0_i)^2}\right) - \ln \frac{\sig_i^e}{\sig_i^0}\right)
\end{align}

Since we observe the same distribution in both parameter settings, we have
\begin{align}
    q_{\gls{statemat}, \gls{controlmat}, \gls{outmat}, e, t}(\by) &= \ln p_{\gls{statemat}, \gls{controlmat}, \gls{outmat}, e, t}(\by) - \ln p_{\gls{statemat}, \gls{controlmat}, \gls{outmat}, 0, t}(\by)\\
    &= \ln p_{\alt{\gls{statemat}}, \alt{\gls{controlmat}}, \alt{\gls{outmat}}, e, t}(\by) - \ln p_{\alt{\gls{statemat}}, \alt{\gls{controlmat}}, \alt{\gls{outmat}}, 0, t}(\by)\\
    &= q_{\alt{\gls{statemat}}, \alt{\gls{controlmat}}, \alt{\gls{outmat}}, e, t}(\by)
\end{align}
Substituting the expressions, we get
\begin{align}
\sum_{i = 1}^{\du} \left(((\mT_t^{-1}\by)_i^2 - (\alt{\mT}_t^{-1}\by)_i^2) \left(\frac{1}{(\gls{std}^e_i)^2} - \frac{1}{(\gls{std}^0_i)^2}\right) \right) = 0
\end{align}

Define the variable $\bu = \mT_t^{-1}\by$ and the matrix $\mH_t = \alt{\mT}_t^{-1}\mT_t$, then $\alt{\mT}_t^{-1}\by = \mH_t\bu$. Therefore, the above expression simplifies to
\begin{align}
    \sum_{i=1}^{\du}\left(\bu_i^2 - (\mH_t\bu)_i^2)\right)\Del_{e, i} = 0
\end{align}
Expanding out the inner term,
\begin{align}
\label{eq:main}
    \sum_{i=1}^{\du} \left(\bu_i^2 - (\sum_{j = 1}^{\du}(\mH_t)_{i, j}\bu_j)^2\right)\Del_{e, i} = 0
\end{align}

Fix an arbitrary $i \le \du$.
Because this is a functional identity, we can differentiate with respect to $\bu_i$.
Note that the coefficient of $\bu_i^2$ in \eqref{eq:main} is
\begin{align}
    \Del_{e, i} - (\mH_t)_{1, i}^2\gls{envvarmat}_{e, 1} - (\mH_t)_{2, i}^2\gls{envvarmat}_{e, 2} - \ldots = \Del_{e, i} - \sum_{k \le \du} (\mH_t)_{k, i}^2\gls{envvarmat}_{e, k}
\end{align}
Also, the coefficient of $\bu_i$ in the $k$-th term of \eqref{eq:main} is
\begin{align}
2\gls{envvarmat}_{e, k}\sum_{j \le \du, j \neq i} (\mH_t)_{k, i}\mH_{k, j} \bu_j
\end{align}
which we obtained by expanding out the $k$-th term as
\begin{align}
(\sum_{j \le \du} (\mH_t)_{k, j}\bu_j)^2\gls{envvarmat}_{e, k} &= \left(\sum_{j_1, j_2 \le \du} (\mH_t)_{k, j_1}(\mH_t)_{k, j_2} \bu_{j_1}\bu_{j_2}\right)\gls{envvarmat}_{e, k}
\end{align}
Putting them together, we can finally write the derivative of \eqref{eq:main} with respect to $\bu_i$ as
\begin{align}
    2\bu_i(\gls{envvarmat}_{e, i} - \sum_{k \le \du}(\mH_t)_{k, i}^2\gls{envvarmat}_{e, k}) - 2\sum_{k \le \du, j \le \du, j \neq i} (\mH_t)_{k, i}(\mH_t)_{k, j}\bu_j\gls{envvarmat}_{e, k} &= 0
\end{align}
Now, this is yet another functional identity. So, we fix an arbitrary $j \neq i$ and again differentiate with respect to $\bu_j$ to get
\begin{align}
    \sum_{k \le \du} (\mH_t)_{k, i}(\mH_t)_{k, j}\gls{envvarmat}_{e, k} &= 0
\end{align}

Define the vector $\bh \in \RR^{\du}$ with the $k$-th entry being $(\mH_t)_{k, i}(\mH_t)_{k, j}$ (recall that $i, j$ have been fixed). Then, the above equation can be written succintly as
\begin{align}
\gls{envvarmat} \cdot \bh = 0 \label{eq:delta_h}
\end{align}
Since $\gls{envvarmat}$ has rank $d_u$, we must have $\bh = 0$. That is, for every $k \le \du$, we have $(\mH_t)_{k, i}(\mH_t)_{k, j} = 0$. Note that the choice of $i \neq j$ was arbitrary and could have been any other two indices. This implies that for all $k, i, j \le n$ with $i \neq j$, we have
\begin{align}
    (\mH_t)_{k, i}(\mH_t)_{k, j} = 0
\end{align}
Therefore, we conclude that each row of $\mH_t$ has at most one nonzero entry.
Moreover, $\mH_t$ is full rank since $\mT_t, \alt{\mT}_t$ are invertible. Therefore, $\mH_t$ must be a scaled permutation matrix, i.e. $\mH_t = \mP\mD$ where $\mP$ is a permutation matrix and $\mD$ is a diagonal matrix. This implies
\begin{align}
    \sum_{i = 1}^t \gls{outmat}\gls{statemat}^{i - 1}\gls{controlmat} &= \mT_t
    = \alt{\mT}_t\mH_t
    = \alt{\mT}_t\mP\mD
    = (\sum_{i = 1}^t \alt{\gls{outmat}}\alt{\gls{statemat}}^{i - 1}\alt{\gls{controlmat}}) \mP\mD
\end{align}

Since $t \le T$ was arbitrary, using this, we can conclude that we can identify the system's Markov parameters given by
\begin{align}
     \mG = [\mI, \gls{outmat}\gls{controlmat}, \gls{outmat}\gls{statemat}\gls{controlmat}, \ldots, \gls{outmat}\gls{statemat}^{T - 1}\gls{controlmat}]
\end{align}
up to permutations and diagonal scaling.
\end{proof}

\subsection{Proof of Lemma \ref{thm:likelihood}}
\label{app:likelihood}
In this section, we prove \cref{thm:likelihood}, which we restate for convenience

\Likelihood*

\begin{proof}
     Let $p_{\gls{normal}}(x; \gls{mean}, \sig^2)$ denote the density of the univariate Gaussian distribution $\normal{\gls{mean}}{\gls{std}^2}$ evaluated at point $x$, so that
    \begin{align}
        \ln p_{\gls{normal}}(x; \gls{mean}, \sig^2) &=  -\frac{(x - \gls{mean})^2}{2\sig^2} - \ln \sig - \ln \sqrt{2\pi}.
    \end{align}
    Also note that the control signal \ut can be expressed by Defn. \ref{def:lti} as
    \begin{align}
        \ut &= \inv{\gls{controlmat}}\brackets{\inv{\gls{outmat}}\yt[t+1]- \gls{statemat}\inv{\gls{outmat}}\yt[t]}\\
        &=  \inv{\gls{controlmat}} \begin{pmatrix}
            \Id{} & \Id{}
        \end{pmatrix}
        \begin{pmatrix}
             \inv{\gls{outmat}}  & \zeros{}\\
            \zeros{} & -\gls{statemat} \inv{\gls{outmat}}
        \end{pmatrix} \begin{pmatrix}
            \yt[t+1]\\
            \yt
        \end{pmatrix} \label{eq:u_from_y}\\
        & = \inv{\mat{T}} \begin{pmatrix}
            \yt[t+1]\\
            \yt
        \end{pmatrix},
    \end{align}
    where we use \inv{\mat{T}} with a slight abuse of notation to refer to the map from the observations to the control signal.
    \eqref{eq:u_from_y} inherently has the same indeterminacies as the transfer function (\cf Lem. \ref{lem:eq_lti_transfer}), irrespective of the symmetries of the distribution of \ut.
    By assumption, $\ut^e$ follows a normal distribution for each environment $e$; furthermore its components are idependent. Thus, the log-likelihood across all environments factorizes both over environments and dimensions, yielding:
    \begin{align}
        \gls{loss}&=\sum_e\sum_t \ln p_{\gls{normal}}\parenthesis{\ut^e|\yt[t+1]^e, \yt^e, \gls{statemat}, \gls{controlmat}, \gls{outmat}}\label{eq:multienv_likelihood}
    \end{align}
    Assuming zero mean $\ut^e$ and exploiting that $\gls{cov}^e$ is known, the multivariate Gaussian log-likelihood becomes a weighted least squares problem:
    \begin{align*}
        \gls{loss} \propto \sum_e\sum_t \transpose{(\ut^e)}\gls{cov}^e\ut^e.
    \end{align*}

    Also note that the \gls{lti} model considered is a first-order Markov chain; thus, conditioning on $\yt[t+1]^e, \yt^e, \gls{statemat}, \gls{controlmat}, \gls{outmat}$ deterministically determines \xt, \xt[t+1] in the noiseless case, removing any other conditional dependence.
    For the remainder of the proof, we assume that the control signal has zero mean in each environment.
    By the rotational symmetry of the Gaussian distribution, we would have a rotational indeterminacy.
    As we show next, the sufficient variability condition of \cref{assn:data_diversity} will break those symmetries.
Since \cref{thm:ident_lti} holds for any multi-environmental setting which satisfies \cref{assn:data_diversity}, let us illustrate with a simple example for clarity. Assume that the baseline environment (with index $0$) is an isotropic Gaussian with a variance of 1. Then, let each environment $e$ have $(\gls{std}^e_e)^2=2$ (\ie, the $e^{th}$ variance component is two, all the others are unchanged). This yields a full-rank matrix $\Del$, satisfying \cref{assn:data_diversity}. Note that in this case, each environment can remove one degree of freedom (the corresponding Gaussian is invariant to rotations in the subspace not including the $e^{th}$ component). By having as many environments as components, this means that we can remove the rotational indeterminacy, yielding an identifiability of \inv{\mat{T}} up to scaling and permutations.
\end{proof}

\section{The \acrlong{cdf} connection}\label{app:sec_cdf}
    In this section, we detail the conditions required for the \gls{cdf} theorem (\cite{guo_causal_2022}; \cf \cref{thm:cdf}) to hold, followed by stating the \gls{cdf} theorem. Then, we show how these conditions are fulfilled in \glspl{hmm}.

    \subsection{The \gls{cdf} conditions}\label{subsec:cdf_cond}

    Often, \glspl{rv} are assumed to be \gls{iid}, but that assumption can sometimes be unrealistic. Exchangeability can be seen as relaxing the \gls{iid} assumption and is defined for \gls{rv} pairs as:
    \begin{definition}[Exchangeable pairs]\label{def:exch_pairs}
        An infinite sequence of \gls{rv} pairs $\parenthesis{\myvec{x}_e,\myvec{y}_e}_{e\in \mathbb{N}}$ is exchangeable if for any permutation $\pi: \mathbb{N} \to \mathbb{N}$ and for any finite $E$
        \begin{align}
            \marginal{\myvec{y}_1, \dots,\myvec{y}_E;\myvec{x}_1, \dots,\myvec{x}_E} &= \marginal{\myvec{y}_{\pi(1)}, \dots,\myvec{y}_{\pi(E)};\myvec{x}_{\pi(1)}, \dots,\myvec{x}_{\pi(E)}}
        \end{align}
    \end{definition}
    Intuitively, exchangeability means that the arguments of the distribution (as with positional arguments in Python code) can be permuted. Alternatively, this means that the pairs are \acrshort{iid} conditioned on a distributional form.
    
    The \gls{cdf} theorem requires exchangeability, and also two conditional independence statements to hold in the underlying causal graph. To introduce these two requirements, let $\gls{nondesc}$ denote a node's non-descendants (\textit{including} parents), $\gls{nondescminuspa}$ the non-descendants \textit{excluding} parents, $\gls{pa}$ the parents, and $\mathbf{Z}$ an arbitrary node (and, by abuse of notation, the corresponding \gls{rv}). We index the components of a vector variable by $i$ and the exchangeable tuples by $e$ (these tuples will correspond to trajectories in \glspl{hmm}, \ie, time series for environment $e$)\footnote{Here we follow the notation of \citet{guo_causal_2022}; in the main text, we index by the environment $e$ as a superscript}. There are two required conditional independence statements; the \textbf{first} is:
    \begin{align}
        Z_{i, \brackets{e}}\perp \gls{nondescminuspa}_{i, \brackets{e}}, | \gls{pa}_{i, \brackets{e}} \ \mathrm{where}\ \brackets{e} = \braces{1; \dots; e} \label{eq:cdf_cond_ind_1}
    \end{align}
    That is, if we condition on its parents, a \gls{rv} $\mathbf{Z}$ should be independent of its non-descendants (\textit{excluding} its parents) in all the tuples we are considering.

    The \textbf{second} conditional independence statement is:
    \begin{align}
        Z_{i, \brackets{e}}\perp {\gls{nondesc}}_{i, e+1} | \gls{pa}_{i, \brackets{e}}  \ \mathrm{where}\ \brackets{e} = \braces{1; \dots; e} \label{eq:cdf_cond_ind_2},
    \end{align}
    which requires that given its parents in a set of tuples, the \gls{rv} is independent of all its non-descendants (including its parents) in any other tuples.

    \subsection{The \gls{cdf} theorem}
    For completeness, we state the \gls{cdf} theorem for exchangeable pairs (Defn.~\ref{def:exch_pairs}).

    \begin{theorem}[\acrlong{cdf}~\citep{guo_causal_2022}]\label{thm:cdf}
        Given an infinite sequence of \gls{rv} pairs $\parenthesis{\myvec{x}_e,\myvec{y}_e}_{e\in \mathbb{N}}$, if $\parenthesis{\myvec{x}_e,\myvec{y}_e}_{e\in \mathbb{N}}$ is infinitely exchangeable (Defn.~\ref{def:exch_pairs}) and there exists a \gls{dag}, where the following two conditions hold $\forall  i\in \brackets{e}, e \in\mathbb{N}$ (explained in \cref{subsec:cdf_cond}; $\brackets{e} = \braces{1; \dots; e}$):
        \begin{align}
            Z_{i, \brackets{e}}\perp \gls{nondescminuspa}_{i, \brackets{e}}, | \gls{pa}_{i, \brackets{e}} 
            \\
            Z_{i, \brackets{e}}\perp {\gls{nondesc}}_{i, e+1} | \gls{pa}_{i, \brackets{e}}  
            ,
        \end{align}
        then 
       \begin{align}
            \marginal{\braces{\yt[1]^e, \dots, \yt[T]^e; \xt[1]^e, \dots, \xt[T]^e}_{e=1}^{\gls{numenv}}}&= \int \prod_{e=1}^{\gls{numenv}}\prod_t \conditional{\yt^e}{\xt^e; \psi} \conditional{\xt[t+1]^e}{\pai^e; \theta}d\mu(\theta)d\nu(\psi), 
        \end{align}
        where $\psi \perp \theta$ are the \gls{cdf} parameters with corresponding measures $\mu, \nu$.
    \end{theorem}
    
    \subsection{The \gls{cdf} conditions for \glspl{hmm}}

    Exchangeability means that conditioned on the distributional form, the sequences are \gls{iid}. For \glspl{hmm} (consider \cref{fig:hmm_tikz} as an example), we define the tuples as  trajectories, and show that  exchangeability holds 
    for each trajectory in a given environment (environment means that the distributional form is now endowed with parameters, \ie, for Gaussians, this would mean that we set the mean and the covariance). We call a trajectory $t$ for a given environment $e$ and state the exchangeability condition as (\cf \citep[Fig.~(a)-(b)]{guo_causal_2022}):
     \begin{align}
            t =& \parenthesis{\xt[1]^e,\yt[1]^e, \dots,\yt[T]^e ,\xt[T]^e}\\
            \marginal{t_1, \dots, t_N} &= \marginal{t_{\pi(1)}, \dots t_{\pi(N)}},
    \end{align}
    where $t$ denotes the trajectory and $N$ the number of trajectories in the environment $e$, \xt refers to the (hidden) state and \yt to the observed variables. Note that the above holds since each $t$ comes from the same distribution, and the trajectories are jointly independent (knowing anything about a trajectory does not provide further information about another trajectory; \cf also the conditional independence statements below).

    \begin{figure*}[t]
        \centering
        \begin{tikzpicture}[node distance=0.8cm, font=\small]
    \node[draw, circle] (x1) {$\xt[1]^1$};
    \node[draw, circle, right=of x1] (x2) {$\xt[2]^1$};
    \node[draw, circle, right=of x2] (x3) {$\xt[3]^1$};
    \node[draw, circle, below=of x1] (o1) {$\yt[1]^1$};
    \node[draw, circle, below=of x2] (o2) {$\yt[2]^1$};
    \node[draw, circle, below=of x3] (o3) {$\yt[3]^1$};

    \draw[-{Latex[length=2mm]}] (x1) -- node[above]{} (x2);
    \draw[-{Latex[length=2mm]}] (x2) -- node[above]{} (x3);
    \draw[-{Latex[length=2mm]}] (x1) -- node[left]{$\psi_{1}$} (o1);
    \draw[-{Latex[length=2mm]}] (x2) -- node[left]{$\psi_{2}$} (o2);
    \draw[-{Latex[length=2mm]}] (x3) -- node[left]{$\psi_{3}$} (o3);

    \node[draw, circle, above=of x1] (theta1) {$\theta_1$};
    \node[draw, circle, above=of x2] (theta2) {$\theta_2$};
    \node[draw, circle, above=of x3] (theta3) {$\theta_3$};

    \node[draw, circle, above=of theta2] (x22) {$\xt[2]^2$};
    \node[draw, circle, left=of x22] (x21) {$\xt[1]^2$};
    \node[draw, circle, right=of x22] (x23) {$\xt[3]^2$};
    \node[draw, circle, above=of x21] (o21) {$\yt[1]^2$};
    \node[draw, circle, above=of x22] (o22) {$\yt[2]^2$};
    \node[draw, circle, above=of x23] (o23) {$\yt[3]^2$};

    \draw[-{Latex[length=2mm]}] (x21) -- node[above]{} (x22);
    \draw[-{Latex[length=2mm]}] (x22) -- node[above]{} (x23);
    \draw[-{Latex[length=2mm]}] (x21) -- node[left]{$\psi_{1}$} (o21);
    \draw[-{Latex[length=2mm]}] (x22) -- node[left]{$\psi_{2}$} (o22);
    \draw[-{Latex[length=2mm]}] (x23) -- node[left]{$\psi_{3}$} (o23);

    \draw[-{Latex[length=2mm]}] (theta1) -- (x1);
    \draw[-{Latex[length=2mm]}] (theta2) -- (x2);
    \draw[-{Latex[length=2mm]}] (theta3) -- (x3);
    \draw[-{Latex[length=2mm]}] (theta1) -- (x21);
    \draw[-{Latex[length=2mm]}] (theta2) -- (x22);
    \draw[-{Latex[length=2mm]}] (theta3) -- (x23);
\end{tikzpicture}
        \caption{An example \gls{hmm} for a single environment (\ie, a single set of parameters $\theta, \psi$) and two trajectories (denoted via superscripts). $\theta$ determines the state transition probabilities \conditional{\xt[t+1]}{\xt}, whereas $\psi$ the conditional probabilities for the observations \conditional{\yt}{\xt} (note that $\psi$ is also the same for \textit{both} trajectories)} 
        \label{fig:hmm_tikz}
    \end{figure*}
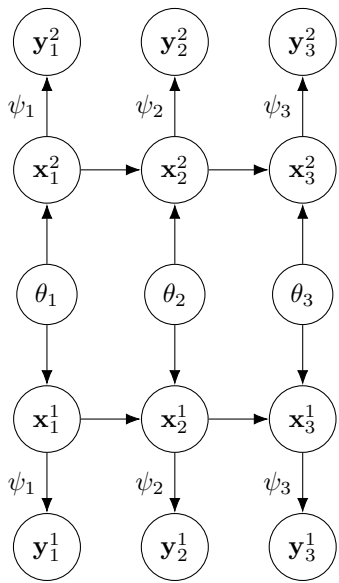

    Now we show that the conditional independence required for the \gls{cdf} theorem hold in \glspl{hmm}. We start with the first condition in~\eqref{eq:cdf_cond_ind_1}, and conclude that it is automatically satisfied in \glspl{hmm}, since the parents of a variable by definition form a Markov blanket, making any variable independent from its non-descendants (excluding its parents)---\cf \eqref{eq:hmm_cond_ind} for the conditional independencies that hold by definition in any \glspl{hmm}. Regarding \eqref{eq:cdf_cond_ind_2}, we use \cref{fig:hmm_tikz} for a visual proof. 
    For example, is $\xt[2]^2 \perp \xt[1]^1 | \xt[1]^2, \theta_2$?
    Note that each path between $\xt[1]^1$ and $\xt[2]^2$ needs to go through one of the $\theta_i$.
    \begin{enumerate}[nolistsep]
        \item The paths through $\theta_1$ are blocked by conditioning on $\xt[1]^2$ (conditioning on the middle variable in the chain $\theta_1\to\xt[1]^2\to\xt[2]^2$ blocks the chain)
        \item The paths through $\theta_2$ are blocked by conditioning on $\theta_2$ (conditioning on the middle variable in the collider $\xt[2]^1\leftarrow\theta_2\to\xt[2]^2$ blocks the collider)
        \item The paths through $\theta_3$ are blocked by \textbf{not} conditioning on $\xt[3]^1$ (not conditioning on the middle variable in the v-structure $\xt[2]^1\to\xt[3]^1\leftarrow\theta_3$ blocks the v-structure)
    \end{enumerate}

\section{Practical implications}

\subsection{Choice of $(\sig_i^e)^2$}\label{app:subsec_choosing_sigma}

Our proof relies on that the environment variability matrix $\gls{envvarmat}\in \rr{\abs{E}\times\gls{controldim}}$ has column rank \gls{controldim}. Thus, only when the vector \myvec{h} is the zero vector will the matrix-vector product $\gls{envvarmat}\cdot \myvec{h}$ in \eqref{eq:delta_h} will be zero. 
When $(\sig_i^e)^2$ is not carefully chosen, the resulting $\Del$ might admits an (almost) zero matrix-vector product even if \myvec{h} is only to be \textit{approximately} the zero vector.
To see this, recall that a matrix's condition number \wrt the \lp-norm is the ratio of the maximum and minimum singular values. The singular values intuitively express the scaling of a linear transformation in a specific direction; thus, if there is a non-zero component in \myvec{h} that is affected by a very small singular value, then the matrix-vector product can still be close to zero, even by violating the assumption that $\myvec{h}=\zeros{}.$
Thus, selecting $(\sig_i^e)^2$ such that it minimizes the condition number of $\Del$ can help avoid the above edge case.
This requires $\Del$ to be orthogonal; thus, elucidating our maximum variability strategy (\cf results in the right-most column in \cref{tab:results1}).
Note that this condition does not depend on the matrices of the \gls{lti} system. Additionally, as opposed to white-noise--based system identification~\citep{ljung1998system}, our proposed method only requires control signals with practically limited spectra---the Gaussian has infinite support, though most of the probability mass concentrates within three standard deviations.

\subsection{Robustness to observation noise}\label{subsec:app_robust}

We provide preliminary experiments on how observation noise affects identifiability, \ie, when 
\begin{align*}
    \yt = \gls{outmat}\xt +\beps.    
\end{align*}
We use $\gls{outmat}\neq\Id{}$, $\gls{controlmat}\neq\Id{}$ and $\gls{mean}^e_{\gls{control}}\neq\zeros{}$. All other hyperparameters are the same as in \cref{sec:experiments}.
The preliminary results in \cref{tab:results_noisy} suggest that our method is somewhat robust to the presence of observation noise, justifying our denoising argument in our theory. However, the setting of noisy observations needs to be more thoroughly investigated.

\begin{table}[!h] %
\setlength{\tabcolsep}{5pt}
    \caption{Robustness of our method against observation noise. We use the minimal $(\gls{controldim}+1)$ number of environments. Mean and standard deviation are reported across 3 runs. \gls{controldim} is the dimensionality of \gls{control} ($\gls{statedim}=\gls{controldim}=\gls{outputdim}$), \gls{numenv} is the number of environments, $\sig^2_{\beps}$ is the variance of the measurement noise, \acrfull{mcc} measures identifiability in \brackets{0;1} (higher is better)}
    \label{tab:results_noisy}
    \vskip -0.15in
    \begin{center}
    \begin{small}
    \begin{sc}
    \begin{tabular}{rr|lllll} \toprule
        \gls{controldim} & \gls{numenv} & \multicolumn{5}{c}{ \acrshort{mcc} $\uparrow$} \\
        &  & $\sig^2_{\beps} =0$ & $\sig^2_{\beps} =\expnum{1}{-4}$& $\sig^2_{\beps}=\expnum{1}{-2}$ & $\sig^2_{\beps}=\expnum{1}{-1}$ & $\sig^2_{\beps}=1$ \\
        \midrule
         {$2$} & 3 & $0.627\scriptscriptstyle\pm 0.090$ & $0.679\scriptscriptstyle\pm 0.162$ & $0.643\scriptscriptstyle\pm 0.068$ & $0.590\scriptscriptstyle\pm 0.034$ & $0.625\scriptscriptstyle\pm 0.038$  \\ %
         {$3$} & 4 &  $0.886\scriptscriptstyle\pm 0.057$ & $0.823\scriptscriptstyle\pm 0.052$ & $0.718\scriptscriptstyle\pm 0.168$ & $0.637\scriptscriptstyle\pm 0.194$ & $0.796\scriptscriptstyle\pm 0.022$\\ %
        \bottomrule
    \end{tabular}
    \end{sc}
    \end{small}
    \end{center}
    \vskip -0.2in
\end{table}

\end{document}